\documentclass{article}

 \usepackage[preprint]{neurips_2026}
 \usepackage{graphicx}
 \usepackage{wrapfig}
\usepackage{graphicx}
\usepackage{subcaption}
\usepackage{multirow}
\usepackage{wrapfig}
\usepackage{booktabs}
\usepackage[utf8]{inputenc} 
\usepackage[T1]{fontenc}    
\usepackage{hyperref}

\usepackage{url}            
\usepackage{booktabs}       
\usepackage{amsfonts}       
\usepackage{nicefrac}       
\usepackage{microtype}      
\usepackage{xcolor}         
\usepackage{amsthm}
\usepackage{amsmath}
\newtheorem{theorem}{Theorem}
\newtheorem{proposition}{Proposition}
\newtheorem{corollary}{Corollary}
\usepackage{cleveref}      
\usepackage{booktabs}
\usepackage{colortbl}
\usepackage{xcolor}

\newcommand{\best}[1]{\textbf{\textcolor{green!60!black}{#1}}}
\title{Mask-Based Priors Are More Persistent than Query-Key Initializations}

\author{%
  Mingze Ma \\
  Australian Institute for Machine Learning \\
  Adelaide University \\
  \texttt{mingze.ma@adelaide.edu.au}
  \And
  Hemanth Saratchandran \\
  Australian Institute for Machine Learning \\
  Adelaide University
  \And
  Cameron Gordon \\
  Australian Institute for Machine Learning \\
  Adelaide University
  \And
  Simon Lucey \\
  Australian Institute for Machine Learning \\
  Adelaide University \\
  \texttt{simon.lucey@adelaide.edu.au}
}

\begin{document}

\maketitle
\begin{abstract}
Transformers do not merely lack data on some Boolean extrapolation tasks; they generalize in a systematically wrong way. Recent work on generalization on the unseen has shown that, despite fitting the observed domain, Transformers often extrapolate according to a simpler minimum-degree interpolator rather than the true target function. These Boolean tasks are not practical applications, but controlled stress tests for understanding Transformer inductive bias. We ask whether this failure mode can be corrected by injecting explicit structural priors into attention. Existing structured-initialization methods alter Transformer inductive bias indirectly, by choosing query and key projections whose similarity scores approximate a desired attention pattern. However, we find that when applied to Boolean extrapolation, these QK-based priors can be rapidly overwritten during training and fail to change the learned extrapolation rule. We propose a simpler alternative: initialize the additive attention mask directly. Unlike standard hard masks used for causality or locality attention, our mask is a finite, learnable attention-logit bias initialized from task-level interaction structure. This separates the structural prior from content-dependent attention scores, allowing it to persist throughout optimization. On Boolean reasoning tasks, mask-based initialization achieves near-perfect extrapolation where vanilla and QK-initialized Transformers remain trapped by the default inductive bias. The same mechanism also improves low-data arithmetic performance and remains competitive on vision and language benchmarks. These results show that attention masks can serve not only as architectural constraints, but as a simple substrate for encoding persistent inductive bias in Transformers.
\end{abstract}

\section{Introduction}
\label{sec:intro}

Attention provides Transformers \citep{vaswani2017attention} a powerful mechanism for learning long range token interactions directly from data. This flexibility has been central to their success across natural language processing \citep{vaswani2017attention, radford2019language, guo2025deepseek} and computer vision \citep{dosovitskiy2020image, liu2021swin, peebles2023scalable}. However, this same flexibility can also make their inductive bias difficult to control. Recent work on generalization on the unseen (GOTU) \citep{abbe2024generalization} provides a sharp example: Transformers can fit the observed data while extrapolating according to the wrong rule. Boolean tasks make this failure especially clear. A model may match every training example yet prefer a simpler minimum-degree interpolator over the true target function on unseen inputs \citep{abbe2024generalization}. This raises a broader question: \textbf{Can we modify self-attention so that its inductive bias better matches the structure of the target problem?}


A natural approach is to initialize self-attention with a structural prior. Rather than requiring the model to infer the relevant relational pattern entirely from data, structured initialization biases the initial attention mechanism toward interactions believed to be important for the task. Classical initialization methods mainly focus on optimization stability, often by analyzing signal propagation under assumptions on pseudo-inputs such as Gaussian-distributed features \citep{sitzmann2020implicit, lindell2022bacon, kumar2017weight, he2015delving}. While effective for stabilizing deep networks, these methods are not designed to initialize the correlation structure learned by self-attention, and typically do not encode explicit relational priors.

\begin{figure}[t]
    \centering
    \includegraphics[width=\textwidth]{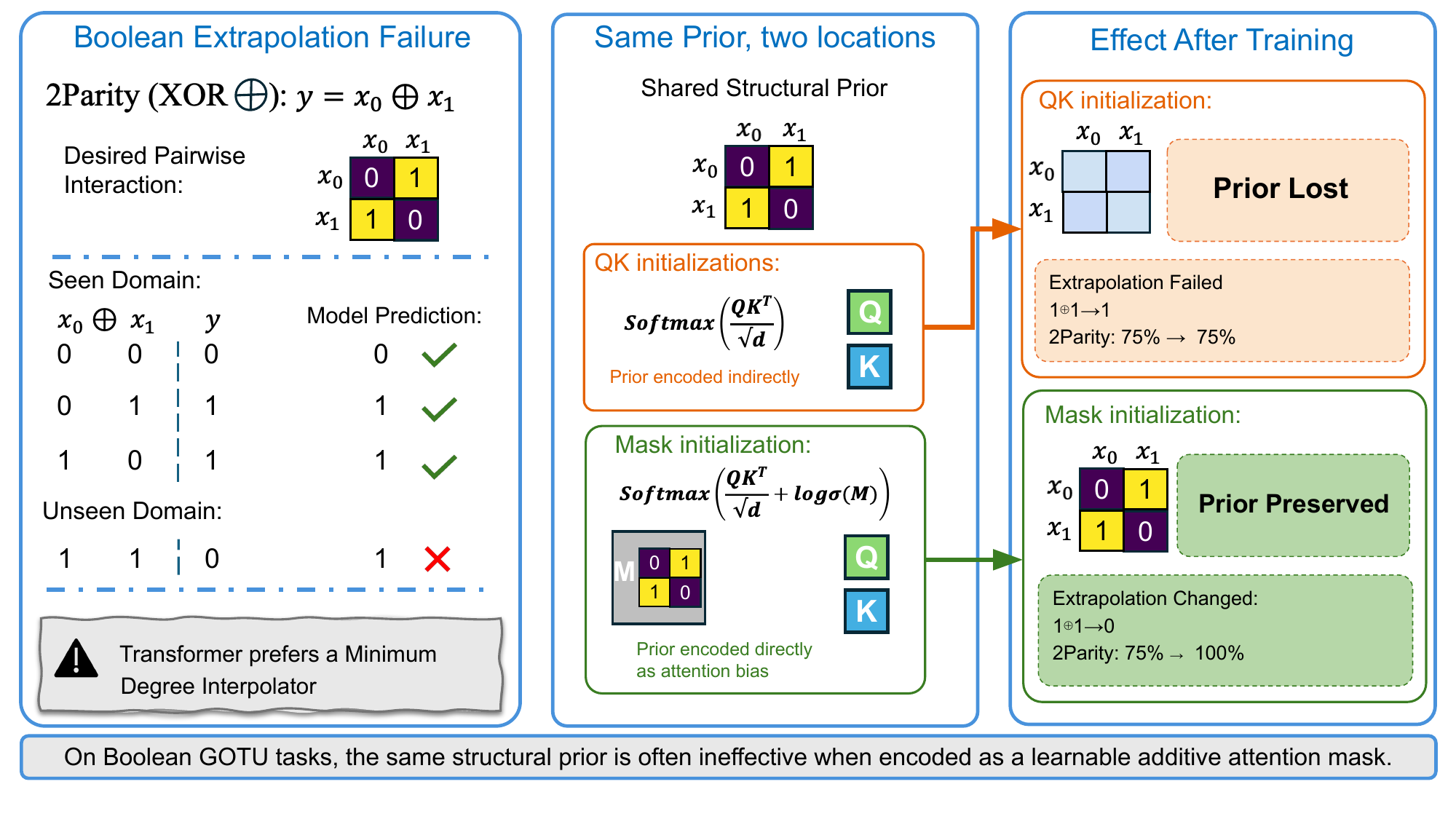}
\caption{
\textbf{Where the prior lives determines whether it changes extrapolation.}
Boolean GOTU tasks expose a failure mode in which Transformers fit the observed domain but extrapolate according to the wrong rule, often a simpler minimum-degree interpolator.
We compare two ways of injecting the same structural prior into attention.
QK-based initialization encodes the prior indirectly through content-dependent query-key projections, choosing $Q$ and $K$ so that $\mathrm{Softmax}(QK^\top/\sqrt{d})$ approximates the desired pattern.
Mask-based initialization instead places the prior directly in the attention logits as a finite, learnable additive bias.
After training, QK-based priors can be washed out, whereas mask-based priors remain explicit and persistent, leading to substantially improved extrapolation.
}   
    \label{fig:teaser}
\end{figure}

More recent works have begun to study initialization directly for attention, either to improve training stability \citep{ji2025cutting} or to modify the model's inductive bias \citep{zheng2025structured}. These methods usually encode a desired attention pattern through the query and key projections, using either prior knowledge about the dataset \citep{zheng2025structured} or posterior observations from well-trained attention maps \citep{trockman2023mimetic}. They have demonstrated improvements in settings where vanilla Transformers previously struggled. However, QK-based structural initialization is indirect: the prior must be represented through content-dependent similarity scores. Since the same query and key projections must also learn task-dependent representations during training, the injected structure may be rapidly repurposed or washed out. As a result, it is unclear whether QK initialization genuinely changes the learned extrapolation rule, particularly in settings such as GOTU or arithmetic problems where the failure mode is explicitly about inductive bias \citep{abbe2024generalization,duan2024attention}.

In this work, we first study Boolean GOTU problems \citep{abbe2024generalization}, where the extrapolation failure is explicit and the relevant variable interactions are known. This makes them a useful lens for testing whether an intervention genuinely changes the inductive bias of a Transformer, rather than merely improving optimization on the observed domain. We show that these logical structures can be naturally encoded as attention-mask patterns, guided by the principle that self-attention should expose the token interactions required by the target rule.

Our main finding is that where the prior is placed matters. State-of-the-art QK-based structured initialization methods can produce the desired attention pattern at initialization \citep{trockman2023mimetic, zheng2025structured}, but do not necessarily preserve this information during training and do not reliably alter the extrapolation behavior of the vanilla Transformer. In contrast, injecting the same structural patterns directly as learnable additive attention masks yields near-perfect extrapolation on Boolean GOTU tasks. This intervention does not introduce a new attention operation: it uses the standard additive attention-mask interface already present in Transformers. However, unlike causal or padding masks, which impose hard architectural constraints, our masks are finite, learnable attention-logit biases initialized from task-level interaction structure. Thus, the mask acts as a persistent inductive bias rather than merely a validity constraint.

Beyond Boolean reasoning, we evaluate the same strategy on arithmetic, image classification, and small language-modeling benchmarks. Arithmetic tasks provide a natural extension from Boolean logic to compositional algorithmic structure, while vision and language tasks test whether mask-based priors remain compatible with settings where Transformers already perform well. Across these settings, mask-based initialization preserves prior information throughout optimization, improves low-data generalization on structured tasks, and remains competitive on standard vision and language benchmarks. These results suggest that additive attention masks can serve not only as architectural constraints, but also as a simple substrate for persistent inductive bias in Transformer attention.

In summary, our main contributions are:
\begin{itemize}
    \item We use Boolean GOTU problems as a sharp test of Transformer extrapolation behavior, and show that appropriately designed attention-mask priors can move Transformers away from minimum-degree solutions.

    \item We show that existing QK-based structured initialization methods may fail to preserve injected priors during training and may not reliably change the extrapolation behavior of vanilla Transformers on logical reasoning tasks.

    \item We propose mask-based structured initialization, which injects a prior directly through a learnable additive attention mask rather than indirectly through query and key matrices. This avoids constructing pseudo-inputs or solving an inverse problem for QK parameters.

    \item We provide a gradient-level explanation for the persistence of mask-based priors. Structure encoded via $QK^\top$ is entangled with query and key feature scales, making it susceptible to distortion during training, whereas additive masks decouple structural bias from content-dependent similarity scores.

    \item We validate mask-based initialization across Boolean reasoning, arithmetic, and image classification, showing strong extrapolation gains on logical and combinatorial tasks while remaining competitive on practical vision and language datasets.
\end{itemize}

\section{Related Works}
\label{sec:relatedwork}
\subsection{Neural Network Initialization and Inductive Biases}
Initialization methods have been widely explored in the deep learning community. Traditionally, they are designed to stabilize the training process, as deep networks often suffer from vanishing or exploding gradients \citep{pmlr-v9-glorot10a, he2015delving}. More recent work has shifted focus toward how initialization can influence learning efficiency. For instance, \citep{xu2023initializing,samragh2024scaling,samragh2023weight} proposes using pretrained weights from large models to initialize smaller models. Similarly, \citep{trockman2023mimetic} suggests transferring structural patterns from well-trained Transformers as initialization, improving performance across a variety of tasks. 

Inductive bias, which characterizes the generalization behavior of machine learning algorithms, has long been a central topic in the field. The inductive bias of Transformers has been extensively studied, with many approaches seeking to explore the limits of extrapolation, generalization, and encoding priors through training\citep{abbe2024generalization,wanganti,zhong2024algorithmic,pushkinminimal,davidovich2026algorithmic, teney2026can, shinnick2026learnimagesproceduralwarmup}. More recently, \citep{zheng2024structured,zheng2025structured} propose \textit{structured initialization} to address the lack of inductive bias in standard Transformer initialization, which typically yields unstructured attention maps. 
Their approach injects CNN-like locality into ViTs by solving for query and key matrices such that $\mathrm{Softmax}(QK^\top/\sqrt{d})$ approximates convolutional impulse patterns. 
This imposes spatial structure at initialization without modifying the architecture, leading to improved performance on small- and medium-scale datasets while maintaining competitive results at scale. However, this induced structure is implicitly encoded through query-key interactions, which may not preserve the desired inductive bias during training.



\subsection{Attention Masks in Transformers}
Masks, as a flexible and easily implemented component, play an important role in Transformer architectures. The causal mask is a feature in the original autoregressive architecture, preventing the model from accessing future tokens and thereby avoiding shortcut solutions that could harm inference performance \citep{pmlr-v235-yin24a,pei2025rethinking, vaswani2017attention}.Beyond causality, masks are widely used to restrict unwanted interactions in attention, particularly when data is limited in both size and dimensionality. This is especially important in vision models and tasks requiring length generalization \citep{zaheer2020big,beltagy2020longformer,tay2020sparse,child2019generating,cheng2022masked, Cheng_2022_CVPR, Pang_2019_ICCV, qihang2022}. Moreover, masking is a key tool for improving computational efficiency, as the quadratic $O(N^2)$ complexity of self-attention can be reduced by limiting the number of token interactions \citep{ho2019axial,roy2021efficient,yuan2025native}. 

In this work, we make two key improvements over prior structured initialization methods.
First, we propose to initialize an additive attention mask rather than the query and key matrices, which avoids solving for $QK$ and explicitly separates structural priors from content-based attention through the masking mechanism. 
Second, we show that mask-based initialization can systematically alter the inductive bias of Transformers, particularly their extrapolation behavior on logical and combinatorial tasks. 
To the best of our knowledge, this effect has not been systematically studied in prior work combining structured initialization and attention masking.

\section{Preliminary}
\label{sec:pre}
\paragraph{Self-Attention} Self-attention is a core component of Transformer models \citep{vaswani2017attention}. Given an input sequence $X\in\mathbb{R}^{N\times D}$, self-attention is parameterized by three learnable projection matrices: query $W_Q\in\mathbb{R}^{D\times d}$, key $W_K\in\mathbb{R}^{D\times d}$, and value $W_V\in\mathbb{R}^{D\times d}$. The output of a single attention head is defined as
\begin{equation}
A(X)
=
\mathrm{softmax}\!\left(
\frac{XW_QW_K^\top X^\top}{\sqrt d}
\right)XW_V,
\label{eq:self_attn}
\end{equation}
where the softmax operator acts row-wise on the attention score matrix. In practice, multiple such heads are used in parallel and concatenated to form a multi-head attention layer.

\paragraph{Minimum-Degree Interpolator}
The notion of a minimum-degree interpolator (MDI) \citep{abbe2024generalization} characterizes an inductive bias where neural networks prefer low-degree structure when fitting data. 
For Boolean functions $f:\{\pm1\}^d \to \mathbb{R}$ with Fourier expansion
\[
f(x)=\sum_{S\subseteq[d]} \hat f(S)\chi_S(x), \quad \chi_S(x)=\prod_{i\in S}x_i,
\]
Multiple functions may interpolate the observed data on $\Omega_{\mathrm{seen}}$. 
MDI selects the solution that explains the data using the lowest-order terms, favoring low-degree structure over higher-order interactions. 
As shown in \citep{abbe2024generalization}, Transformers have a tendency to favor these low-degree solutions over the true higher-order functions for Boolean tasks such as 2Parity and 3-Bit Majority, representing a substantial limitation in the ability for Transformers to perform extrapolation within logic problems.


\section{Methods}
\label{sec:method}
 
\subsection{Encoding Inductive Biases through a Learnable Mask}

For many structured tasks, such as image classification and logical problems, prior knowledge about token interactions is often available. This knowledge can be expressed as a correlation map over tokens, indicating which interactions should be encouraged or suppressed. Structured initialization methods \citep{zheng2025structured,zheng2024structured,trockman2023mimetic} aim to encode such prior knowledge into self-attention, typically by shaping the initial attention patterns through the query-key matrices.

In our approach, we directly encode prior structure as a learnable mask $M \in \mathbb{R}^{N \times N}$. 
The mask is initialized from manually constructed patterns that reflect the underlying problem structure. For example, \Cref{fig:mask_all} illustrates how the desired inductive biases for Boolean and arithmetic tasks can be encoded as attention maps, where only relevant tokens (e.g., aligned digits or interacting variables) are allowed to attend. Entries corresponding to desired interactions are assigned active values, while undesired interactions are assigned smaller (more negative) values to suppress them. Through the transformation $\log\sigma(M)$, this induces a soft, learnable bias on the attention logits, enabling the model to retain the prescribed structure while adapting during training. The full formula is $
\mathrm{softmax}\!\left(
\frac{XW_QW_K^\top X^\top}{\sqrt d}
+
\log\sigma(M)
\right),
$
where $\sigma(\cdot)$ denotes the sigmoid function. 
Since $\log\sigma(M)\le 0$, this selectively down-weights undesired interactions while preserving the standard attention mechanism. 
During training, $M$ remains learnable and is optimized jointly with the model parameters, allowing the encoded structural prior to persist throughout optimization.

\subsection{Principle for Designing Mask}
Based on the previous mask methods \citep{vaswani2017attention, duan2024attention}, which blocks access to certain tokens to prevent spurious correlations and shortcut learning, 
we hypothesize that properly designed masks can similarly restrict irrelevant interactions in other tasks, particularly in low-data regimes where learning correlations is difficult, thereby improving generalization. 
To this end, we outline principles for mask-based initialization: the key idea is to encode prior knowledge about which token interactions are relevant for the task, while suppressing spurious correlations.
As a simple example, consider the 3-bit majority task and its MDI solution. The non-MDI solution can be expressed as a global summation followed by a sign function, whereas the MDI solution relies on pairwise interaction terms between inputs. 
Therefore, as shown in \Cref{fig:mask_all}, we suppress such interaction terms by restricting attention, preventing the model from adopting the MDI solution and more uniform distributed attentions. This induces a structured attention pattern aligned with the underlying 3-Bit Majority algorithm. We visualize the corresponding mask patterns for different tasks in \Cref{fig:mask_all}, 
and provide more detailed design principles in Supplementary Materials \Cref{app:init_mask}.

\begin{figure}[t]
    \centering
    \begin{subfigure}{0.56\linewidth}
        \centering
        \includegraphics[width=\linewidth]{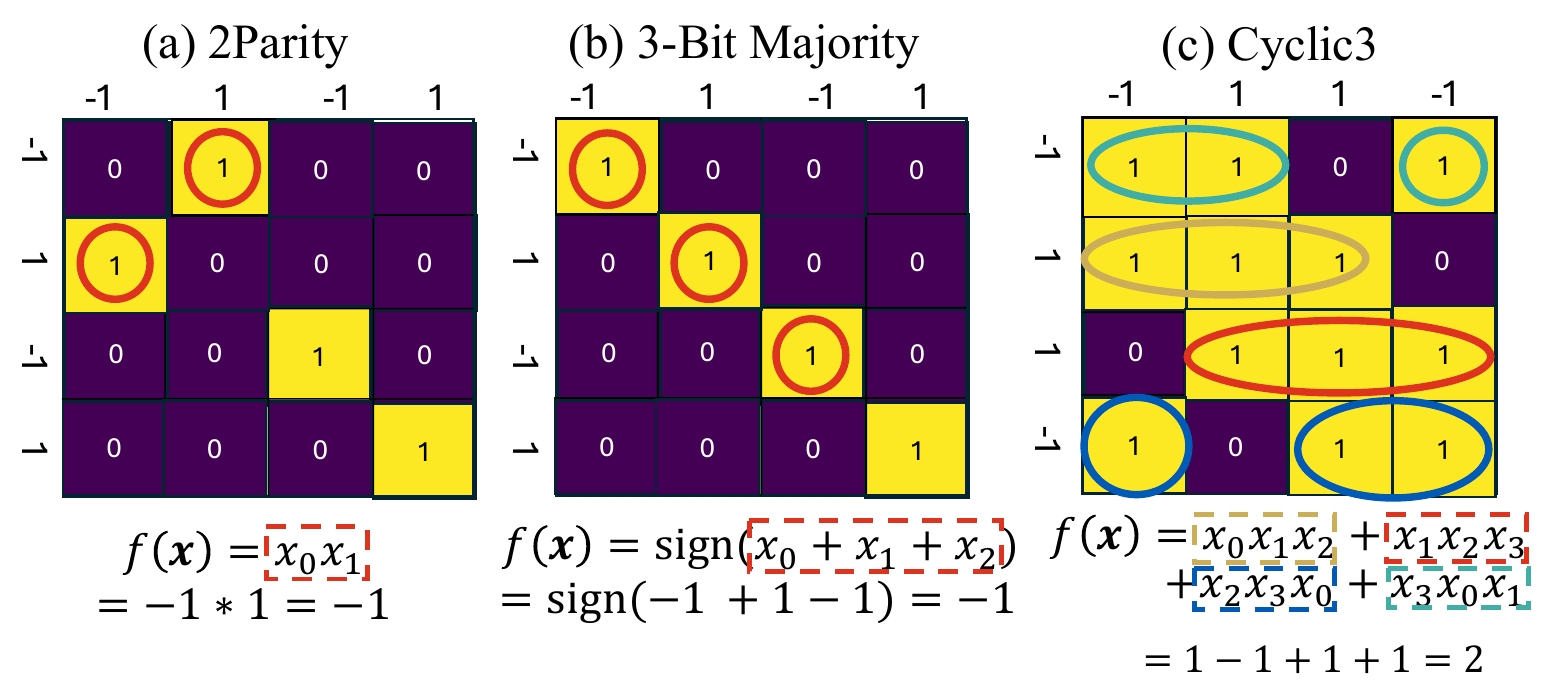}
        \vspace{-19pt}
        \caption{Logical problems}
    \end{subfigure}
    \hfill
    \begin{subfigure}{0.42\linewidth}
        \centering
        \includegraphics[width=\linewidth]{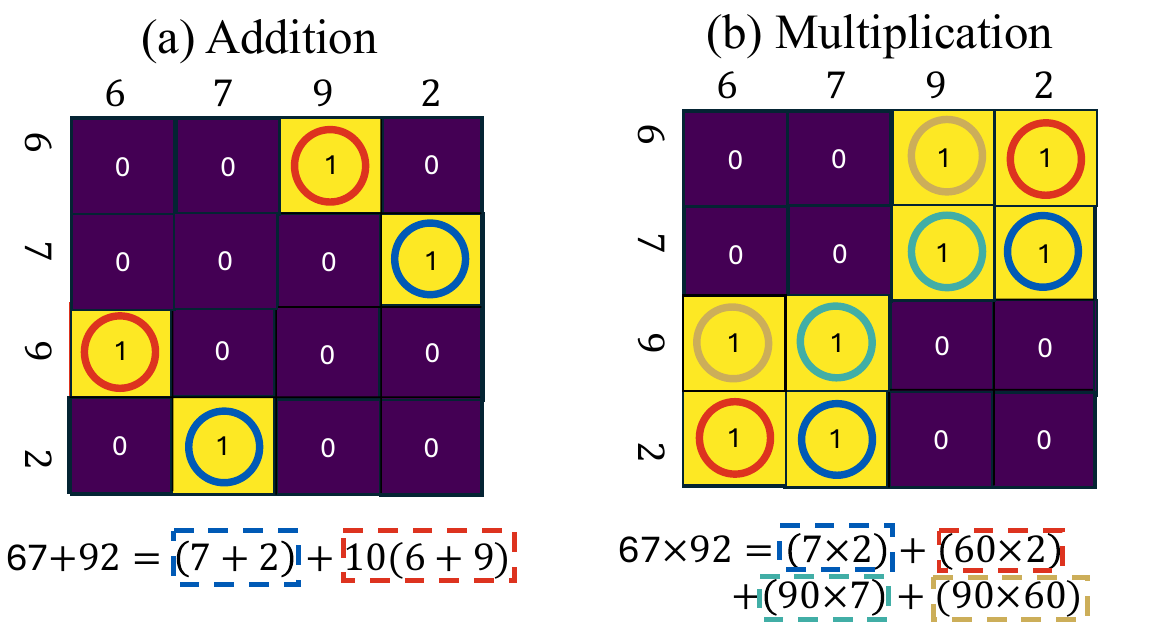}
        \caption{Arithmetic problems}
    \end{subfigure}

    \caption{
    Visualization of task-specific mask design. 
    \textbf{(i) Logical problems.} 
    (a) 2Parity encodes pairwise interaction between the first two tokens; 
    (b) 3-Bit Majority emphasizes aggregation over the first three tokens; 
    (c) Cyclic3 captures cyclic triple interactions. 
    \textbf{(ii) Arithmetic problems.} 
    (a) Addition enforces digit-wise interactions between aligned positions; 
    (b) Multiplication encodes structured cross-digit interactions corresponding to partial products. 
    The masks define the interaction structure underlying each task, aligning attention with the functional decomposition.
    }
    \label{fig:mask_all}
\end{figure}
\section{Experiments}
\label{sec:exp}
In this section, we evaluate our method on Boolean, arithmetic, and image classification tasks. 
Across these settings, we observe that priors encoded via query-key initialization are easily washed out during optimization, whereas mask-based initialization preserves them more persistently, leading to improved performance. Additional NLP results are provided in \Cref{app:wiki}.
\begin{figure}[!t]
\centering
\includegraphics[width=0.9\linewidth]{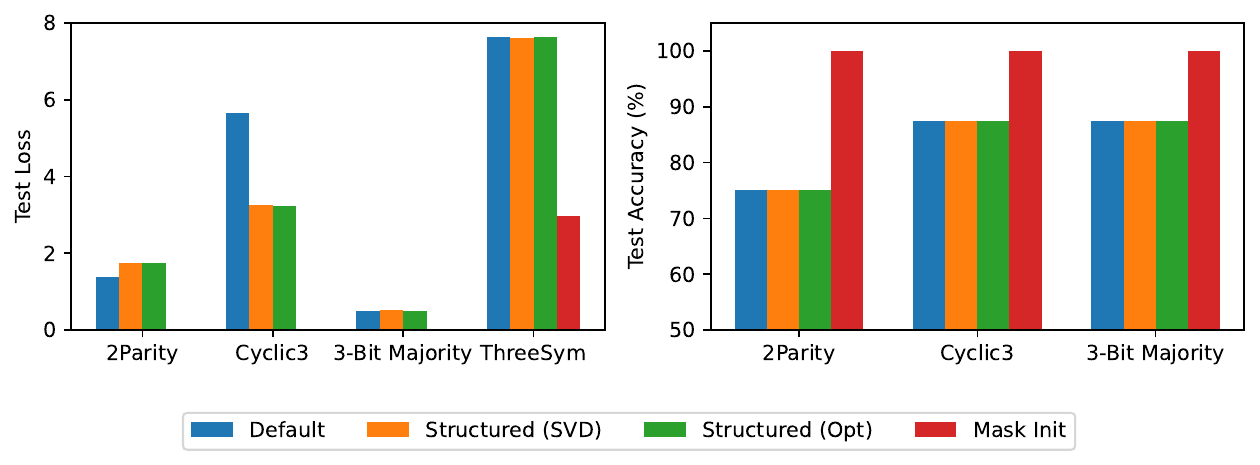}
\caption{
Test performance on Boolean reasoning tasks under an extrapolation setting. The left panel shows test loss and the right panel shows test accuracy. Mask Init consistently achieves lower loss and higher or comparable accuracy compared to default and QK-based initialization methods, indicating improved extrapolation. Accuracy is not reported for ThreeSym due to its continuous outputs.
}
\label{fig:boolean_results}
\end{figure}
\subsection{Boolean Problems}
We evaluate our method on four Boolean tasks under the Generalization on the Unseen (GOTU) setting \cite{abbe2024generalization}. These tasks are important as they reflect the extrapolation and inductive bias properties of Transformers. Performance on these datasets provides a useful lens for evaluating whether a properly designed initialization, with the correct correlation patterns, can alter extrapolation behavior. For each task, let $f$ denote the ground-truth function and $\tilde f$ denote the minimum-degree interpolator (MDI), i.e., the lowest-degree function that agrees with $f$ on the seen domain. Visualization and explanation of the initialization pattern can be found at \Cref{app:init_mask}.

\paragraph{2Parity.}
$f(x)=x_0x_1$, seen domain $x_0=1$ or $x_1=1$ (i.e., excluding $(-1,-1)$). On this domain, $x_0x_1 = x_0 + x_1 - 1$, so $\tilde f(x)=x_0 + x_1 - 1$ (degree $2\to1$).

\paragraph{Cyclic3.}
$f(x)=\sum_{i=0}^{14} x_i x_{i+1} x_{i+2}$, seen domain excludes $(x_0,x_1,x_2)=(-1,-1,-1)$. The first term satisfies $x_0x_1x_2 = x_0x_1 + x_1x_2 + x_2x_0 - x_0 - x_1 - x_2 + 1$, so $\tilde f$ replaces this cubic term by a quadratic one (degree $3\to2$).

\paragraph{3-Bit Majority.}
$f(x)=\mathrm{Maj}(x_0,x_1,x_2)=\frac{x_0+x_1+x_2-x_0x_1x_2}{2}$, seen domain $x_0=1$ or $x_1=1$. On this domain, the cubic term $x_0x_1x_2$ can be rewritten into pairwise terms, yielding $\tilde f(x)=\frac{x_0+x_1+2x_2-x_0x_2-x_1x_2}{2}$ (degree $3\to2$).

\paragraph{ThreeSym.}
$f(x)=x_0x_1 - 1.25\,x_1x_2 + 1.5\,x_2x_0$, seen domain $x_0x_1x_2=1$. On this domain, $x_0x_1\to x_2$, $x_1x_2\to x_0$, $x_2x_0\to x_1$, so $\tilde f(x)=x_2 - 1.25\,x_0 + 1.5\,x_1$ (degree $2\to1$).

\begin{table}[!h]
\centering
\caption{Test performance across Boolean tasks. Lower loss and higher accuracy indicate better performance. 
Structured (SVD) and Structured (Opt) denote structured initialization methods based on SVD decomposition \citep{zheng2025structured} and direct query-key optimization \citep{zheng2024structured}, respectively. 
Our constructed mask initialization consistently achieves lower loss, suggesting better preservation of prior structural information during training.}
\resizebox{\linewidth}{!}{
\begin{tabular}{lcccc|cccc}

\toprule
 & \multicolumn{4}{c}{Test Loss} & \multicolumn{4}{c}{Test Accuracy (\%)} \\
\cmidrule(lr){2-5} \cmidrule(lr){6-9}
Task & Default & Structured (SVD) & Structured (Opt) & Mask Init & Default & Structured (SVD) & Structured (Opt) & Mask Init \\
\midrule
2Parity  
& 1.379 & 1.737 & 1.746 & \best{0.0029} 
& 75.00 & 75.00 & 75.00 & \best{100.00} \\

Cyclic3  
& 5.648 & 3.257 & 3.230 & \best{0.0034} 
& 87.45 & 87.48 & 87.47 & \best{99.98} \\

3-Bit Majority 
& 0.509 & 0.516 & 0.495 & \best{0.0051} 
& 87.40 & 87.40 & 87.40 & \best{100.00} \\

ThreeSym 
& 7.621 & 7.616 & 7.635 & \best{2.9745} 
& --- & --- & --- & --- \\
\bottomrule
\end{tabular}
}
\label{tab:boolean_results}
\end{table}

Across all tasks, $\tilde f$ has strictly lower degree (or simpler structure) than $f$ while matching it on the seen domain. This provides a controlled setting to study whether models follow the MDI bias or whether structural priors can preserve the original higher-order interactions. We use the full dataset (including both seen and unseen samples) for testing, while training is restricted to the seen subset. From \Cref{fig:boolean_results} and \Cref{tab:boolean_results}, we observe that initialization can indeed influence the inductive bias of Transformers, as the GOTU setting explicitly evaluates extrapolation beyond the training domain. Moreover, even when using the same structural pattern, initialization through the query-key matrices tends to quickly forget the imposed prior during optimization, which leads to poor generalization on unseen data. In contrast, mask-based initialization preserves the prior and enables better extrapolation.

\subsection{Arithmetic Problems}
Another challenging set of problems is arithmetic problems, as these problems are rule-based and require only a small amount of dataset to learn the in-distribution generalization or even out-distribution generalization \citep{mcleish2024transformers}. However, Neural Networks usually require more dataset than expected to be able to even generalize in distribution \citep{trask2018neural}. Here, we test if using our mask initialization, transformer can do much better on in-distribution generalization. For both tasks, we construct the dataset by enumerating all possible input pairs ($10^6$ combinations) and split into training and test sets, where a fraction is used for training and the remainder for evaluation.

\paragraph{Addition}
We evaluate 3-by-3 digit decimal addition in an in-distribution setting, where all operands have exactly three digits. 
The Transformer predicts the full output sequence in a single forward pass.

\paragraph{Multiplication}
We evaluate 3-by-3 digit decimal multiplication under the same setting. 
Compared to addition, multiplication is more challenging as it requires multiple intermediate digit-wise multiplications, carry operations, and the accumulation of partial sums.

\begin{figure}[!t]
\centering
\includegraphics[width=0.8\linewidth]{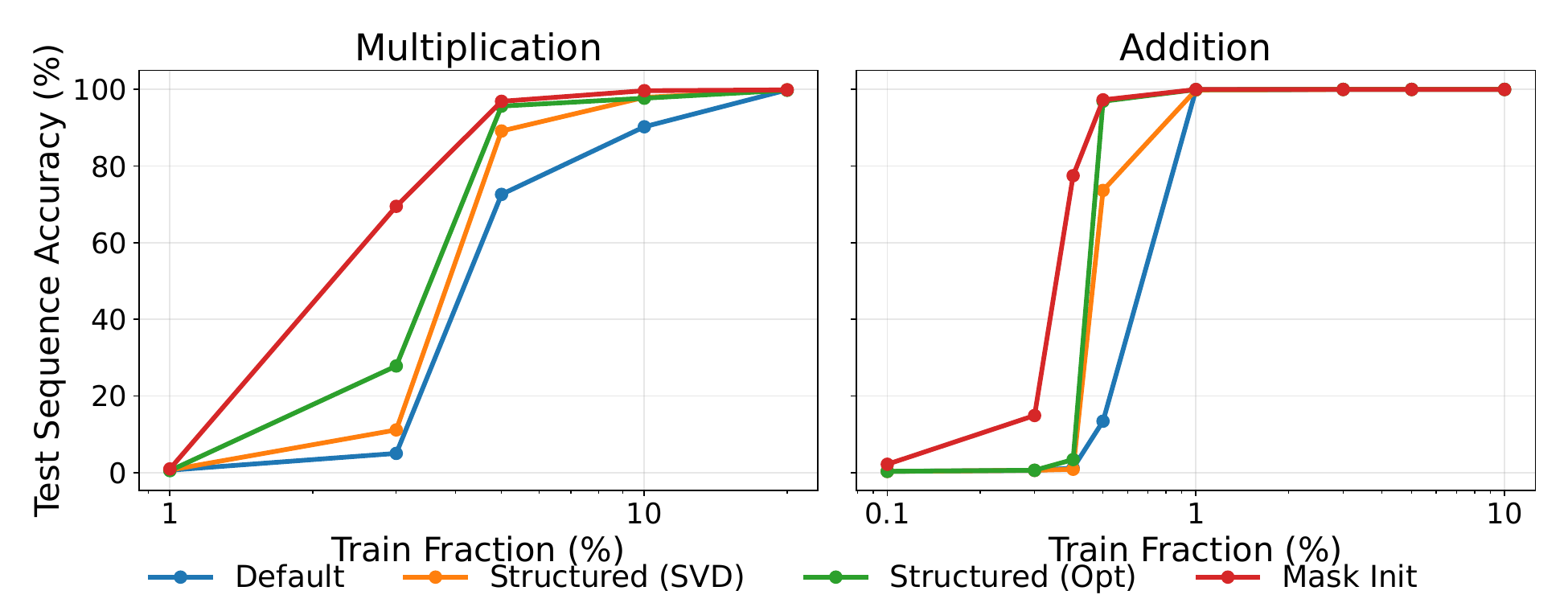}
\caption{
Test sequence accuracy under varying training fractions for multiplication (left) and addition (right). While all methods improve as more data is available, mask initialization achieves significantly faster generalization, especially in low-data regimes.
}
\label{fig:data_efficiency}
\end{figure}
\begin{table}[!ht]

\centering
\caption{Performance of arithmetic tasks that lower loss and higher accuracy indicate better performance.}
\label{tab:arithmetic}
\resizebox{\linewidth}{!}{
\begin{tabular}{l|ccc|ccc}
\toprule
 & \multicolumn{3}{c|}{Multiplication (Train frac = 3\%)} 
 & \multicolumn{3}{c}{Addition (Train frac = 0.4\%)} \\
\cmidrule(lr){2-4} \cmidrule(lr){5-7}
Method 
& Seq Acc (\%) $\uparrow$ & Digit Acc (\%) $\uparrow$ & Loss $\downarrow$
& Seq Acc (\%) $\uparrow$ & Digit Acc (\%) $\uparrow$ & Loss $\downarrow$ \\
\midrule
Default 
& 5.01 & 70.78 & 2.27 
& 1.19 & 40.59 & 4.51 \\

Structured (SVD) 
& 11.13 & 74.57 & 1.77 
& 0.87 & 38.56 & 4.59 \\

Structured (Opt) 
& 27.82 & 82.37 & 0.85 
& 3.39 & 49.87 & 3.40 \\

Mask Init 
& \best{69.49} & \best{94.13} & \best{0.20}
& \best{77.49} & \best{93.86} & \best{0.23} \\
\bottomrule
\end{tabular}
}
\end{table}

As shown in \Cref{tab:arithmetic}, mask initialization consistently outperforms other methods on both multiplication and addition tasks. 
The advantage is most pronounced in low-data regimes (e.g., addition with frac = 0.4\%), where other methods fail to generalize while mask initialization achieves strong sequence-level accuracy. As shown in \Cref{fig:data_efficiency}, although all methods improve with more data, mask initialization attains substantially better generalization with fewer training samples. These results indicate that mask initialization accelerates the learning of the underlying algorithmic structure.

\subsection{Image Classification}
It is known that spatial priors for vision tasks can be encoded in attention through structured initialization \citep{zheng2025structured}. 
Following \citep{zheng2025structured}, we adopt an impulse filter pattern to initialize our masks (see Supplementary Figure \Cref{app:impulse} for visualization). 
 In this section, we evaluate three initialization methods on CIFAR10, CIFAR100, and ImageNet100 \citep{imagenetrussakovsky, Krizhevsky09learningmultiple}. For comparison, we adopt the implementation from \citep{zheng2025structured}, which provides the most up-to-date codebase. 
We include these benchmarks to demonstrate that the proposed patterns extend beyond toy settings and remain effective in practical scenarios. 
Moreover, these datasets align with the primary evaluation domain of prior structured initialization methods. We compare against mimetic initialization \citep{trockman2023mimetic}, which initializes attention close to an identity pattern with noise, reflecting a common structure observed in trained Transformers, and the Impulse method \citep{zheng2025structured}, which initializes each attention head to approximate a $3 \times 3$ impulse filter, resulting in a diagonal-like attention pattern. 
Both approaches inject locality as an inductive bias, which has been shown to improve generalization, particularly in low-data regimes.

\begin{table}[!h]

\centering
\caption{Best Top-1 accuracy (\%) on vision benchmarks that are all using ViT-T model.}
\label{tab:cv_results}
\begin{tabular}{lcccc}
\toprule
Dataset & Default & Mimetic & Impulse & Mask Init \\
\midrule
CIFAR10    & 94.20 & 94.80 & 95.12 & \best{95.66} \\
CIFAR100   & 74.00 & 76.80 & 76.47 & \best{77.95} \\
ImageNet100 & 83.38 & 84.38 & 83.16 & \best{85.92} \\
\bottomrule
\end{tabular}
\end{table}
As shown in \Cref{tab:cv_results}, mask initialization consistently achieves the best performance, with higher accuracy than baseline and structured methods, indicating improved optimization and generalization.
Additional visualizations are provided in \Cref{app:trained_mask,app:trained_img}.

\section{Discussion} 

\subsection{Persistence of Prior Knowledge during Optimization}
In this section, we explain why mask-based initialization preserves prior information better than query--key initialization. 
From a gradient perspective, we show that QK-based methods induce large updates that wash out priors, supported by empirical visualizations. An advantage of additive masks over direct $QK$ initialization is their robustness to imperfect structural priors. A binary mask only specifies which interactions are allowed, while the model still learns how to distribute attention among them, making it less sensitive to misspecification. In contrast, $QK$-based initialization often treats allowed interactions more uniformly, reducing flexibility.
Moreover, mask-based priors are more persistent during optimization. As an explicit bias in the attention logits, the structural information remains directly present in the computation, rather than being implicitly absorbed into evolving query-key representations. To verify this intuition, we present the following proposition:
\begin{theorem}
\label{thm:masked_attention_jacobian}
Consider a single attention row with query $q\in\mathbb{R}^d$, keys $\{k_j\}_{j=1}^N$, and mask parameters $\{m_j\}_{j=1}^N$. Define the softmax function working on the attention score as
\[
p_j=
\frac{\exp(q^\top k_j/\sqrt d+\log\sigma(m_j))}
{\sum_{\ell=1}^N\exp(q^\top k_\ell/\sqrt d+\log\sigma(m_\ell))}.
\]
Then, for any $i,j\in\{1,\ldots,N\}$,
\begin{align}
\frac{\partial p_i}{\partial q}
&=
\frac{p_i}{\sqrt d}
\left(
k_i-\sum_{\ell=1}^N p_\ell k_\ell
\right),
\\
\frac{\partial p_i}{\partial k_j}
&=
\frac{1}{\sqrt d}
p_i(\delta_{ij}-p_j)q,
\\
\frac{\partial p_i}{\partial m_j}
&=
p_i(\delta_{ij}-p_j)\sigma(-m_j).
\end{align}
\end{theorem}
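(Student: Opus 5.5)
The plan is to reduce all three identities to the standard softmax Jacobian and then apply the chain rule through the logits. Write the logits as $z_\ell = q^\top k_\ell/\sqrt d + \log\sigma(m_\ell)$, so that $p_i = e^{z_i}/\sum_\ell e^{z_\ell}$. The first step is the well-known identity
\[
\frac{\partial p_i}{\partial z_\ell} = p_i(\delta_{i\ell} - p_\ell).
\]
It follows from differentiating the quotient: the numerator contributes $\delta_{i\ell} p_i$, and the normalizer contributes $-p_i p_\ell$. Every claimed formula is then $\sum_\ell p_i(\delta_{i\ell}-p_\ell)\,\partial z_\ell/\partial(\cdot)$, so only the partial derivatives of the logits with respect to $q$, $k_j$ and $m_j$ remain to be computed.

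\emph{Query.} Every logit depends on $q$, with $\partial z_\ell/\partial q = k_\ell/\sqrt d$. Summing against the Jacobian gives
\[
\frac{1}{\sqrt d}\sum_\ell p_i(\delta_{i\ell}-p_\ell)k_\ell = \frac{p_i}{\sqrt d}\Bigl(k_i - \sum_\ell p_\ell k_\ell\Bigr),
\]
which is the first identity.

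\emph{Keys.} The key $k_j$ enters only the logit $z_j$, with $\partial z_j/\partial k_j = q/\sqrt d$. The sum therefore collapses to the single term $\ell=j$, giving $\frac{1}{\sqrt d}p_i(\delta_{ij}-p_j)q$.

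\emph{Mask parameters.} Similarly, $m_j$ enters only $z_j$. Here $\frac{d}{dm}\log\sigma(m) = \sigma'(m)/\sigma(m) = 1-\sigma(m) = \sigma(-m)$, using $\sigma' = \sigma(1-\sigma)$ and the symmetry $1-\sigma(m)=\sigma(-m)$. This yields $p_i(\delta_{ij}-p_j)\sigma(-m_j)$.

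No step is a real obstacle. The only points needing care are the bookkeeping for the gradient with respect to $q$, where all $N$ logits contribute and the Kronecker delta must be resolved into the centered key $k_i - \bar k$, and the small derivative of $\log\sigma$. I would also state the convention that derivatives with respect to vectors are written as column gradients, so that the shapes of the results are consistent.
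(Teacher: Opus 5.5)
Your proposal is correct and follows essentially the same route as the paper: introduce the logits $z_\ell$, use the softmax Jacobian $\partial p_i/\partial z_\ell = p_i(\delta_{i\ell}-p_\ell)$, and apply the chain rule with $\partial z_\ell/\partial q = k_\ell/\sqrt d$, $\partial z_j/\partial k_j = q/\sqrt d$, and $\partial z_j/\partial m_j = \sigma(-m_j)$. You also spell out details the paper only asserts, namely the derivation $\frac{d}{dm}\log\sigma(m)=1-\sigma(m)=\sigma(-m)$ and why the sums for $k_j$ and $m_j$ collapse to the single term $\ell=j$.
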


\begin{corollary}
\label{cor:masked_attention_grad_norm}
Under the setting of Theorem $\ref{thm:masked_attention_jacobian}$, suppose
$\|q\|_2\le B_q$ and $\|k_j\|_2\le B_k$ for all $j$. Then
\begin{align}
\left\|\frac{\partial p_i}{\partial q}\right\|_2
&\le
\frac{2p_iB_k}{\sqrt d},
\\
\left\|\frac{\partial p_i}{\partial k_j}\right\|_2
&\le
\frac{p_i|\delta_{ij}-p_j|B_q}{\sqrt d}
\le
\frac{p_iB_q}{\sqrt d},
\\
\left|\frac{\partial p_i}{\partial m_j}\right|
&\le
p_i|\delta_{ij}-p_j|\sigma(-m_j)
\le
p_i .
\end{align}
\end{corollary}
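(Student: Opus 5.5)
The plan is to read the corollary off the three Jacobian formulas of Theorem~\ref{thm:masked_attention_jacobian}, bounding each factor separately with the triangle inequality and the fact that $(p_1,\dots,p_N)$ is a probability vector. Nothing beyond the theorem, norm subadditivity, and elementary properties of the sigmoid is needed.

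\emph{Query Jacobian.} By the theorem, $\partial p_i/\partial q = \frac{p_i}{\sqrt d}\bigl(k_i-\sum_\ell p_\ell k_\ell\bigr)$. Since $p_\ell\ge 0$ and $\sum_\ell p_\ell=1$, the vector $\sum_\ell p_\ell k_\ell$ is a convex combination of the keys. By convexity of the norm,
\[
\Bigl\|\sum_\ell p_\ell k_\ell\Bigr\|_2\le \sum_\ell p_\ell\|k_\ell\|_2\le B_k .
\]
The triangle inequality then gives $\|k_i-\sum_\ell p_\ell k_\ell\|_2\le 2B_k$. Multiplying by the nonnegative scalar $p_i/\sqrt d$ yields the first bound.

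\emph{Key Jacobian.} Here $\partial p_i/\partial k_j=\frac{1}{\sqrt d}p_i(\delta_{ij}-p_j)\,q$ is a scalar multiple of $q$. Its norm is therefore $\frac{1}{\sqrt d}p_i|\delta_{ij}-p_j|\,\|q\|_2\le \frac{p_i|\delta_{ij}-p_j|B_q}{\sqrt d}$. For the second inequality I will check $|\delta_{ij}-p_j|\le 1$ in two cases:
\begin{itemize}
\item if $i=j$, the quantity is $1-p_j\in[0,1]$;
\item if $i\neq j$, it is $p_j\in[0,1]$.
\end{itemize}

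\emph{Mask derivative.} The scalar formula $p_i(\delta_{ij}-p_j)\sigma(-m_j)$ has absolute value $p_i|\delta_{ij}-p_j|\sigma(-m_j)$, because $p_i\ge 0$ and $\sigma>0$. The final bound $\le p_i$ follows from $|\delta_{ij}-p_j|\le 1$ (as above) together with $0<\sigma(-m_j)<1$.

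No step is a real obstacle. The only points needing care are the convex-combination bound in the query case, which is what produces the factor $2B_k$, and the case split for $|\delta_{ij}-p_j|\le 1$. I will also note that the matrix interpretation of $\partial p_i/\partial q$ as a gradient vector makes the $\ell_2$ norm the natural one, so there is no ambiguity in which norm is bounded.
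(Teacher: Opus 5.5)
Your proposal is correct and follows the paper's proof essentially step for step. The paper likewise bounds the query Jacobian by $2B_k$ via the convex-combination and triangle inequalities, pulls out $\|q\|_2\le B_q$ for the key Jacobian, and uses $|\delta_{ij}-p_j|\le 1$ and $\sigma(-m_j)\le 1$ for the mask derivative; your explicit case split for $|\delta_{ij}-p_j|\le 1$ just fills in a step the paper asserts without comment.
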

Notably, since the mask is added to the attention logits, the query-key gradients retain the same form as in vanilla self-attention, and \Cref{thm:masked_attention_jacobian} also applies without the mask. However, encoding structure via $QK^\top$ requires careful control of query and key scales, which can be large in practice \citep{zheng2025structured}. 
Because attention gradients scale with $\|q\|_2$ and $\|k\|_2$, large feature norms are possible to induce large updates, and may update fast and override the initialized prior. In contrast, additive masking decouples structural bias from content similarity and does not amplify gradients through feature scales. 
Its gradients are further moderated by the sigmoid factor $\sigma(-m)\leq 1$, leading to more stable updates. 
As a result, mask-based priors will update slower than qk-based initialization and could preserve the prior more stable. Proofs can be found at \Cref{app:proof1} and \Cref{app:proof2}.

%
Comparing structured initialization with mask-based priors, we find that masks preserve inductive bias more effectively during training. 
As shown in \Cref{fig:comparison_pattern}, QK-based methods are constrained by the low-rank structure of self-attention, leading to poor initialization of sparse patterns (e.g., (c)) and structural degradation over training (e.g., (a)). 
In contrast, mask-based initialization maintains the intended interaction structure, resulting in significantly better extrapolation.
\begin{figure}[t]
    \centering

        \includegraphics[width=1\linewidth]{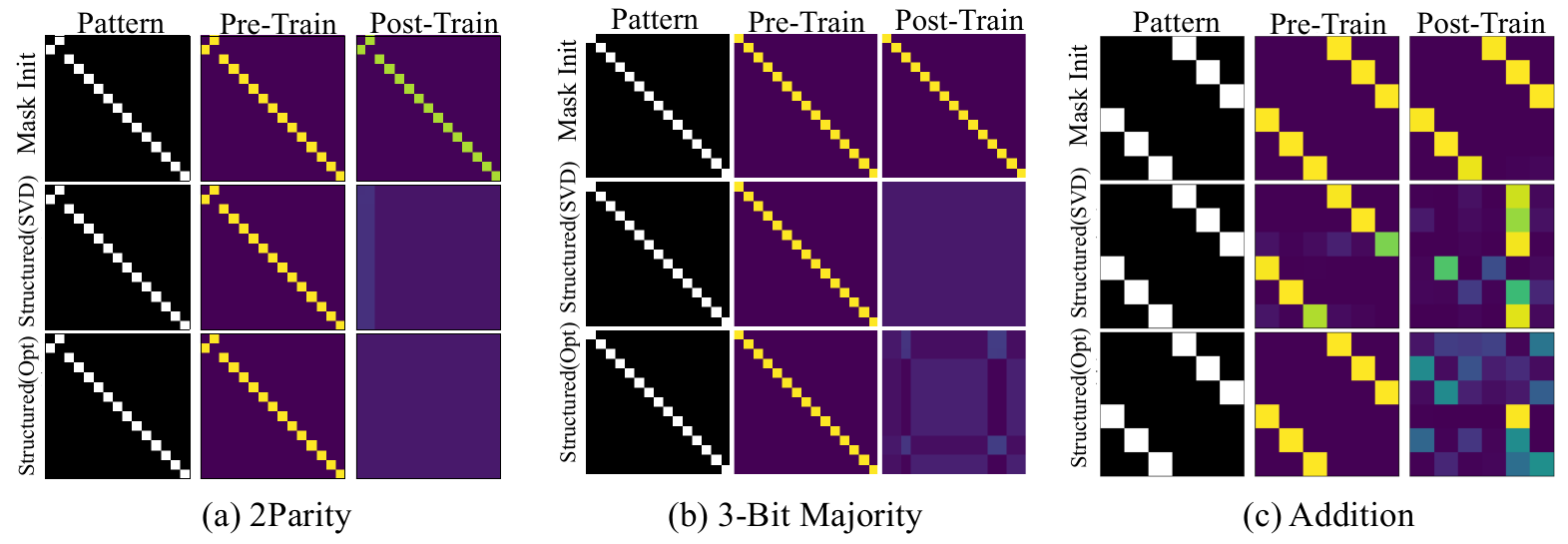}      
        
    \label{fig:comparison_pattern}
\caption{Comparison of attention patterns before and after training across initialization methods for Head 0 at Layer 0. 
Mask-based initialization preserves the target interaction structure, whereas QK-based structured initialization fails to retain it after training, even with identical initial patterns, highlighting the importance of parameterization.}
\end{figure}

\section{Conclusion and Limitations}
\label{sec:concl}
We study whether the inductive bias of Transformer attention can be steered when its default bias leads to systematic extrapolation errors. 
Boolean GOTU tasks provide a controlled setting, where Transformers fit the seen domain but converge to minimum-degree solutions that fail on unseen inputs. 
Our results show that this behavior depends not only on the attention mechanism itself, but also on how structural priors are represented. We compare two ways of injecting the same interaction prior. QK-based initialization encodes it indirectly through content-dependent projections, which can match the desired pattern at initialization but may be repurposed during training. 
In contrast, mask-based initialization places the prior directly as an additive attention-logit bias, decoupling it from content similarity and making it more persistent during optimization. 

Empirically, mask-based priors improve extrapolation on Boolean tasks, shifting models away from minimum-degree solutions toward the desired inductive bias. 
They also improve data efficiency on arithmetic tasks and remain competitive on vision and language benchmarks. 
Overall, additive masks provide a simple mechanism for encoding persistent inductive bias in Transformers.

\paragraph{Limitations.}
Our approach assumes access to a meaningful interaction prior, which is natural in structured domains such as logic, arithmetic, and vision, but less clear in domains such as language. This is in effect an oracle assumption which allows us to demonstrate that useful inductive priors may be encoded within transformers using masks, but this may not be known in advance for all tasks. Designing effective masks may require knowledge of domain characteristics, and learning priors from data or pretrained models for general problems remains an important direction for future work.
\newpage

\bibliographystyle{unsrtnat}
\bibliography{reference}

\newpage
\appendix
\setcounter{theorem}{0}
\setcounter{corollary}{0}
\section{Ablation Study}

\subsection{Does the Gain Come From Additional Mask Parameters?}
We ablate structured mask initialization using an all-zero initialized learnable mask, corresponding to a neutral bias over interactions. 
As shown in \Cref{tab:mask_wo_init}, it matches the vanilla Transformer and fails to escape the extrapolation plateau, 
indicating that the gains stem from the structured prior and its persistence during optimization.
\begin{table}[!h]
\centering
\caption{Ablation on mask initialization. Without structured initialization, the mask performs similarly to the default Transformer, indicating that gains stem from the structured prior rather than the masking mechanism. 
All-zero initialization corresponds to a neutral mask with no preference over token interactions.}
\label{tab:mask_wo_init}
\begin{tabular}{lcc|cc|cc}
\toprule
 & \multicolumn{2}{c|}{Default} 
 & \multicolumn{2}{c|}{Mask (All-zero Init.)} 
 & \multicolumn{2}{c}{Mask (Pattern Init.)} \\
\cmidrule(lr){2-3} \cmidrule(lr){4-5} \cmidrule(lr){6-7}
Task & Loss $\downarrow$ & Acc (\%) $\uparrow$ 
     & Loss $\downarrow$ & Acc (\%) $\uparrow$ 
     & Loss $\downarrow$ & Acc (\%) $\uparrow$ \\
\midrule
2Parity  
& 1.38 & 75.00   
& 1.38 & 75.00   
& \best{0.02}  & \best{100.00} \\

Cyclic3  
& 5.65 & 87.47 
& 2.53 & 87.46 
& \best{0.00}  & \best{99.98} \\

3-Bit Majority 
& 0.51 & 87.4 
& 0.50 & 87.4 
& \best{0.01}  & \best{100.00} \\

ThreeSym 
& 7.62 & ---    
& 7.62 & ---    
& \best{2.97}  & --- \\
\bottomrule
\end{tabular}
\end{table}

\subsection{Why Not Just Fix the Mask?}
Another natural question is whether learnability is necessary, as fixed masks can already perform well on structured tasks such as Boolean and arithmetic problems. 
However, in more realistic settings where only a rough inductive bias is available and the exact interaction structure is unknown, learnability becomes crucial. 
A learnable mask allows the model to adapt the prior during training, enabling it to refine or correct imperfect initial patterns and achieve better performance.
From \Cref{tab:ablation_cv}, we observe that fixed masks can perform well when the prior aligns closely with the task structure, as in CIFAR100, which contains 100 classes with only 500 images per class. 
In this setting, local and fine-grained features are particularly important, making a fixed locality-based prior effective.

However, their performance degrades on CIFAR10 and ImageNet100, which exhibit different characteristics. 
CIFAR10 has only 10 classes with 5,000 images per class, encouraging the model to rely more on global semantic features rather than purely local patterns. 
ImageNet100 further increases complexity, with higher-resolution images (typically $224\times224$) and more diverse spatial structures, requiring more flexible and long-range interactions. 
In these cases, a fixed prior is insufficient to capture the underlying structure.

In contrast, learnable masks consistently achieve strong performance across all datasets. 
This suggests that while fixed masks rely on a specific inductive bias, learnable masks can adapt to dataset-dependent structures, providing robustness when the true prior is unknown or only approximately specified.
\begin{table}[!ht]
\centering
\caption{Best Top-1 accuracy (\%) on vision benchmarks. Values in parentheses denote improvement over Default.}
\label{tab:ablation_cv}
\begin{tabular}{lccccc}
\toprule
Dataset & Default & Mimetic & Impulse & Fixed Mask & Mask Init \\
\midrule
CIFAR10    
& 94.20 
& 94.80 (+0.60) 
& 95.12 (+0.92) 
& 94.16 (-0.04) 
& \best{95.66 (+1.46)} \\

CIFAR100   
& 74.00 
& 76.80 (+2.80) 
& 76.47 (+2.47) 
& \best{78.77 (+4.77)} 
& 77.95 (+3.95) \\

ImageNet100 
& 83.38 
& 84.38 (+1.00) 
& 83.16 (-0.22) 
& 81.78 (-1.60) 
& \best{85.82 (+2.44)} \\
\bottomrule
\end{tabular}
\end{table}

\section{Additive Mask v.s. Hadamard Product Mask}
A natural question is whether a Hadamard-product mask,
$
\sigma(M)\circ 
\mathrm{softmax}\!\left(
\frac{XW_QW_K^\top X^\top}{\sqrt d}
\right),
$
could achieve a similar effect. 
We argue that the additive formulation is more stable in terms of both gradient norms and attention behavior. First, additive masking preserves softmax normalization, ensuring that attention weights remain a valid probability distribution, whereas multiplicative masking directly rescales them. 
Second, multiplicative masking may lead to degenerate cases: if $\sigma(M)$ is close to zero across a row, attention can collapse, resulting in vanishing weights and unstable gradients. 
In contrast, additive masking operates before normalization, allowing softmax to renormalize suppressed interactions and thus improving stability. We formalize this intuition in the following theorem.

\begin{proposition}
\label{thm:softmax_bound}
Let $\mathrm{softmax}:\mathbb{R}^{N\times N}\to\mathbb{R}^{N\times N}$ denote the row-wise matrix softmax map, and let $\nabla \mathrm{softmax}(A)$ denote its Jacobian at $A\in\mathbb{R}^{N\times N}$ and the Hadamard-product masking map $H(A,M)=\sigma(M)\circ \mathrm{softmax}(A)$. Then the following bounds hold for the Frobenius norms:
\begin{align}
\|\mathrm{softmax}(A)\|_F
&\ge
1 \qquad
\|\sigma(M)\circ \mathrm{softmax}(A)\|_F
\ge
0,
\label{eq:softmax_bound1}
\\
\|\nabla_A\,\mathrm{softmax}(A)\|_F
&\ge
\|\nabla_A\!\left(\sigma(M)\circ \mathrm{softmax}(A)\right)\|_F
\ge
0.
\label{eq:softmax_bound2}
\end{align}
\end{proposition}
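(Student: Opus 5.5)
The proposition consists of four inequalities, and each reduces to an elementary entrywise fact. I will vectorize throughout: write $S=\mathrm{softmax}(A)$, with rows $s_i=\mathrm{softmax}(a_i)\in\Delta^{N-1}$, the probability simplex.

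\emph{First inequality in \eqref{eq:softmax_bound1}.} Each row $s_i$ is a probability vector, so $\sum_k S_{ik}=1$. By Cauchy--Schwarz,
\[
1=\Big(\sum_k S_{ik}\Big)^2\le N\sum_k S_{ik}^2 .
\]
Hence $\|s_i\|_2^2\ge 1/N$ for every row. Summing over the $N$ rows gives $\|S\|_F^2=\sum_i\|s_i\|_2^2\ge N\cdot\frac1N=1$. Equality holds exactly when every row is uniform.

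\emph{Second inequality in \eqref{eq:softmax_bound1}.} The bound $\|\sigma(M)\circ S\|_F\ge 0$ holds because it is a norm. I will remark that it cannot be improved: for entries $M_{ik}\to-\infty$, $\sigma(M_{ik})\to 0$ while $S$ stays bounded, so the Hadamard product can be made arbitrarily small. This contrast is the point of the proposition.

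\emph{Jacobian comparison \eqref{eq:softmax_bound2}.} Regard the Jacobians as $N^2\times N^2$ arrays indexed by $((i,k),(j,l))$. Since $M$ does not depend on $A$, the product rule gives
\[
\frac{\partial H_{ik}}{\partial A_{jl}}=\sigma(M_{ik})\,\frac{\partial S_{ik}}{\partial A_{jl}} .
\]
In matrix form this reads $\nabla_A H=\mathrm{diag}(\mathrm{vec}\,\sigma(M))\,\nabla_A S$, so every row of the Jacobian is scaled by a factor in $(0,1)$. Using $0<\sigma(M_{ik})<1$, I then bound
\[
\|\nabla_A H\|_F^2=\sum_{i,k,j,l}\sigma(M_{ik})^2\Big(\frac{\partial S_{ik}}{\partial A_{jl}}\Big)^2\le\sum_{i,k,j,l}\Big(\frac{\partial S_{ik}}{\partial A_{jl}}\Big)^2=\|\nabla_A S\|_F^2 .
\]
The lower bound $\ge 0$ in \eqref{eq:softmax_bound2} is again trivial.

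For concreteness I will also record the explicit softmax Jacobian, $\partial S_{ik}/\partial A_{jl}=\delta_{ij}S_{ik}(\delta_{kl}-S_{il})$, which follows from the single-row computation in Theorem~\ref{thm:masked_attention_jacobian}. This shows the Jacobian is block diagonal across rows, though that structure is not needed for the inequality.

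The only step needing any care is the first inequality, specifically recognizing that Cauchy--Schwarz applied row by row gives exactly the constant $1$. Everything else is a direct entrywise comparison, so I do not expect a genuine obstacle.
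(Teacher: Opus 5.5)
Your proposal is correct and follows essentially the same route as the paper's proof: row-wise Cauchy--Schwarz gives $\|s_i\|_2^2\ge 1/N$ and hence $\|\mathrm{softmax}(A)\|_F\ge 1$, the Hadamard product gets the trivial lower bound $0$ (which cannot be improved as $\sigma(M)\to 0$), and the Jacobian comparison uses $\partial H_{ik}/\partial A_{jl}=\sigma(M_{ik})\,\partial S_{ik}/\partial A_{jl}$ with $\sigma\le 1$, summed over all entries. Your vectorized form $\mathrm{diag}(\mathrm{vec}\,\sigma(M))\,\nabla_A S$ and the explicit block-diagonal softmax Jacobian are only more detailed bookkeeping of the same argument.
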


This proposition, inspired by \citep{saratchandran2024rethinking}, highlights that additive masking preserves non-trivial lower bounds on both attention and gradient norms, preventing collapse and ensuring stable optimization. 
While both formulations may achieve similar empirical performance, the additive parameterization provides a more robust and reliable mechanism for controlling attention.

\section{Proof of Proposition 1}
\label{app:proof1}

\begin{proof}
For the Hadamard-product mask $H(A,M)=\sigma(M)\circ \mathrm{softmax}(A)$, the degenerate case occurs when $\sigma(M)$ is arbitrarily close to $\mathbf{0}\in\mathbb{R}^{N\times N}$. In this case, every entry of $H(A,M)$ can be arbitrarily close to zero, so the Frobenius norm admits the trivial lower bound $0$.

It remains to show that the row-wise softmax attention cannot have zero Frobenius norm. We write both the vanilla attention logits and the additively masked logits as a generic matrix $A=(a_{ij})_{i,j\in[N]}$. For the $i$-th row, define
\[
p_j
=
\frac{e^{a_{ij}}}{\sum_{\ell=1}^N e^{a_{i\ell}}},
\qquad j=1,\ldots,N.
\]
Then $p_j\ge 0$ and $\sum_{j=1}^N p_j=1$. Hence, by Cauchy--Schwarz,
\[
\|\mathrm{softmax}(A_i)\|_2^2
=
\sum_{j=1}^N p_j^2
=
\|p\|_2^2
\ge
\frac{\langle p,\mathbf{1}\rangle^2}{\|\mathbf{1}\|_2^2}
=
\frac{1}{N}.
\]
Summing over all rows gives
\[
\|\mathrm{softmax}(A)\|_F^2
=
\sum_{i=1}^N
\|\mathrm{softmax}(A_i)\|_2^2
\ge
\sum_{i=1}^N \frac{1}{N}
=
1.
\]
Therefore,
\[
\|\mathrm{softmax}(A)\|_F \ge 1.
\]

It remains to prove the gradient inequality. Let
\[
P=\mathrm{softmax}(A),\qquad H(A,M)=\sigma(M)\circ P.
\]
For fixed $M$, the derivative of $H$ with respect to $A$ satisfies
\[
\frac{\partial H_{ij}}{\partial A_{ik}}
=
\sigma(M_{ij})
\frac{\partial P_{ij}}{\partial A_{ik}},
\]
and derivatives across different rows are zero, as in the standard row-wise softmax. Since
\[
0\leq\sigma(M_{ij})\leq1,
\]
we have entrywise
\[
\left|
\frac{\partial H_{ij}}{\partial A_{ik}}
\right|
\le
\left|
\frac{\partial P_{ij}}{\partial A_{ik}}
\right|.
\]
Therefore, summing over all entries of the Jacobian gives
\[
\|\nabla_A H(A,M)\|_F^2
\le
\|\nabla_A \mathrm{softmax}(A)\|_F^2.
\]
Taking square roots yields
\[
\|\nabla_A(\sigma(M)\circ \mathrm{softmax}(A))\|_F
\le
\|\nabla_A\,\mathrm{softmax}(A)\|_F.
\]
This completes the proof.
\end{proof}

\section{Proof of Theorem 1 and Corollary 1}
\label{app:proof2}
\begin{theorem}
\label{thm:masked_attention_jacobian}
Consider a single attention row with query $q\in\mathbb{R}^d$, keys $\{k_j\}_{j=1}^N$, and mask parameters $\{m_j\}_{j=1}^N$. Define the softmax function working on the attention score as
\[
p_j=
\frac{\exp(q^\top k_j/\sqrt d+\log\sigma(m_j))}
{\sum_{\ell=1}^N\exp(q^\top k_\ell/\sqrt d+\log\sigma(m_\ell))}.
\]
Then, for any $i,j\in\{1,\ldots,N\}$,
\begin{align}
\frac{\partial p_i}{\partial q}
&=
\frac{p_i}{\sqrt d}
\left(
k_i-\sum_{\ell=1}^N p_\ell k_\ell
\right),
\\
\frac{\partial p_i}{\partial k_j}
&=
\frac{1}{\sqrt d}
p_i(\delta_{ij}-p_j)q,
\\
\frac{\partial p_i}{\partial m_j}
&=
p_i(\delta_{ij}-p_j)\sigma(-m_j).
\end{align}
\end{theorem}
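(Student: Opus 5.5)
The plan is to reduce all three identities to the standard softmax Jacobian composed with the chain rule through the logits. Write the logits of the row as
\[
z_\ell = \frac{q^\top k_\ell}{\sqrt d} + \log\sigma(m_\ell), \qquad \ell=1,\dots,N,
\]
so that $p_i = e^{z_i}/\sum_\ell e^{z_\ell}$. The first step is the classical identity
\[
\frac{\partial p_i}{\partial z_\ell} = p_i(\delta_{i\ell} - p_\ell).
\]
It follows by the quotient rule. The numerator contributes $\delta_{i\ell}e^{z_i}/Z$ with $Z=\sum_\ell e^{z_\ell}$. The denominator contributes $-e^{z_i}e^{z_\ell}/Z^2 = -p_ip_\ell$. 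Everything else is then $\partial p_i/\partial x = \sum_\ell p_i(\delta_{i\ell}-p_\ell)\,\partial z_\ell/\partial x$ for each parameter $x\in\{q,k_j,m_j\}$.

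Next I compute the partial derivatives of the logits with respect to each parameter.

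\textbf{The query.} Every logit depends on $q$, with $\partial z_\ell/\partial q = k_\ell/\sqrt d$. Summing gives
\[
\frac{p_i}{\sqrt d}\sum_\ell(\delta_{i\ell}-p_\ell)k_\ell = \frac{p_i}{\sqrt d}\Big(k_i - \sum_\ell p_\ell k_\ell\Big).
\]

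\textbf{A key $k_j$.} Only $z_j$ depends on $k_j$, with $\partial z_j/\partial k_j = q/\sqrt d$. This yields $\frac{1}{\sqrt d}p_i(\delta_{ij}-p_j)q$.

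\textbf{A mask parameter $m_j$.} Again only $z_j$ depends on $m_j$. The remaining ingredient is the scalar identity
\[
\frac{d}{dm}\log\sigma(m) = \frac{\sigma'(m)}{\sigma(m)} = 1-\sigma(m) = \sigma(-m).
\]
Here I use $\sigma'(m)=\sigma(m)(1-\sigma(m))$ and $1-\sigma(m)=\sigma(-m)$, both verified directly from $\sigma(m)=1/(1+e^{-m})$. This gives $p_i(\delta_{ij}-p_j)\sigma(-m_j)$.

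There is no genuine obstacle. The only point needing care is bookkeeping in the $q$ case, since $q$ enters every logit and the sum must be collapsed using $\sum_\ell \delta_{i\ell}k_\ell = k_i$. The key and mask cases are single-term because each $k_j$ and $m_j$ appears in exactly one logit. I would also state the gradient-shape convention explicitly: derivatives with respect to vectors are taken as column vectors in $\mathbb{R}^d$.
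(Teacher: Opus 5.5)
Your proposal is correct and follows essentially the same route as the paper: you write the logits $z_j = q^\top k_j/\sqrt d + \log\sigma(m_j)$, apply the softmax Jacobian $\partial p_i/\partial z_j = p_i(\delta_{ij}-p_j)$, and chain through $\partial z_j/\partial q = k_j/\sqrt d$, $\partial z_j/\partial k_j = q/\sqrt d$, and $\partial z_j/\partial m_j = \sigma(-m_j)$. The only difference is that you also derive the softmax Jacobian and the identity $\frac{d}{dm}\log\sigma(m)=\sigma(-m)$, which the paper quotes as standard without proof.
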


\begin{proof}
We consider a single attention row with
\[
p_j = \frac{\exp\!\left(s_j/\sqrt d + \log\sigma(m_j)\right)}{\sum^N_{\ell=1} \exp\!\left(s_\ell/\sqrt d + \log\sigma(m_\ell)\right)}.
\]
Let
\[
z_j = s_j/\sqrt d + \log\sigma(m_j).
\]
Then $p = \mathrm{softmax}(z)$, and the standard softmax derivative gives
\[
\frac{\partial p_i}{\partial z_j}
=
p_i(\delta_{ij} - p_j).
\]

By the chain rule, we have
\[
\frac{\partial z_j}{\partial q} = \frac{1}{\sqrt d} k_j,
\qquad
\frac{\partial z_j}{\partial k_j} = \frac{1}{\sqrt d} q,
\qquad
\frac{\partial z_j}{\partial m_j} = \sigma(-m_j).
\]

Combining these yields
\[
\frac{\partial p_i}{\partial q}
=
\sum^N_{j=1} \frac{\partial p_i}{\partial z_j} \frac{\partial z_j}{\partial q}
=
\frac{p_i}{\sqrt d}
\left(k_i - \sum^N_{\ell=1} p_\ell k_\ell\right),
\]
\[
\frac{\partial p_i}{\partial k_j}
=
\frac{1}{\sqrt d} p_i(\delta_{ij} - p_j) q,
\]
and
\[
\frac{\partial p_i}{\partial m_j}
=
p_i(\delta_{ij} - p_j)\sigma(-m_j),
\]
which completes the proof.
\end{proof}

\begin{corollary}
\label{cor:masked_attention_grad_norm}
Under the setting of Theorem $\ref{thm:masked_attention_jacobian}$, suppose
$\|q\|_2\le B_q$ and $\|k_j\|_2\le B_k$ for all $j$. Then
\begin{align}
\left\|\frac{\partial p_i}{\partial q}\right\|_2
&\le
\frac{2p_iB_k}{\sqrt d},
\\
\left\|\frac{\partial p_i}{\partial k_j}\right\|_2
&\le
\frac{p_i|\delta_{ij}-p_j|B_q}{\sqrt d}
\le
\frac{p_iB_q}{\sqrt d},
\\
\left|\frac{\partial p_i}{\partial m_j}\right|
&\le
p_i|\delta_{ij}-p_j|\sigma(-m_j)
\le
p_i .
\end{align}
\end{corollary}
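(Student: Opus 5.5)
We take the three Jacobian formulas of Theorem~\ref{thm:masked_attention_jacobian} as given and bound each one separately, using only the facts that $p$ is a probability vector ($p_\ell\ge 0$, $\sum_\ell p_\ell=1$) and that $0<\sigma(\cdot)<1$.

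\textbf{Query bound.} By Theorem~\ref{thm:masked_attention_jacobian},
\[
\frac{\partial p_i}{\partial q}=\frac{p_i}{\sqrt d}\Bigl(k_i-\sum_\ell p_\ell k_\ell\Bigr).
\]
We apply the triangle inequality to the vector in parentheses. Its first term has norm $\|k_i\|_2\le B_k$. Its second term is a convex combination of the keys, so
\[
\Bigl\|\sum_\ell p_\ell k_\ell\Bigr\|_2\le\sum_\ell p_\ell\|k_\ell\|_2\le B_k .
\]
Hence the vector has norm at most $2B_k$. Multiplying by the nonnegative scalar $p_i/\sqrt d$ gives $\|\partial p_i/\partial q\|_2\le 2p_iB_k/\sqrt d$.

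\textbf{Key bound.} Here $\partial p_i/\partial k_j$ is the scalar $p_i(\delta_{ij}-p_j)/\sqrt d$ times the vector $q$. Its norm is therefore exactly $p_i|\delta_{ij}-p_j|\,\|q\|_2/\sqrt d\le p_i|\delta_{ij}-p_j|B_q/\sqrt d$. For the second inequality we show $|\delta_{ij}-p_j|\le 1$. If $i=j$, the quantity is $1-p_j\in[0,1]$. If $i\ne j$, it is $p_j\in[0,1]$.

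\textbf{Mask bound.} The derivative $\partial p_i/\partial m_j$ is the scalar $p_i(\delta_{ij}-p_j)\sigma(-m_j)$. Taking absolute values and using $p_i\ge0$ and $\sigma(-m_j)>0$ gives equality with $p_i|\delta_{ij}-p_j|\sigma(-m_j)$, which in particular satisfies the stated $\le$. Combining $|\delta_{ij}-p_j|\le1$ with $\sigma(-m_j)<1$ then yields the bound $p_i$.

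No step is a genuine obstacle. The only point needing care is the query bound: using the convex-combination structure gives the clean factor $2$ rather than a crude bound like $(1+N)B_k$.
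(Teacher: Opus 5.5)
Your proposal is correct and follows the paper's proof essentially step for step: the triangle inequality with the convex-combination bound $\|\sum_\ell p_\ell k_\ell\|_2\le B_k$ for the query term, the exact norm of a scalar multiple of $q$ for the key term, and $|\delta_{ij}-p_j|\le 1$, $\sigma(-m_j)\le 1$ for the mask term. Your case check for $|\delta_{ij}-p_j|\le 1$ justifies a fact the paper only asserts.
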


\begin{proof}
From Theorem~\ref{thm:masked_attention_jacobian}, we have
\[
\frac{\partial p_i}{\partial q}
=
\frac{p_i}{\sqrt d}
\left(k_i - \sum^N_{\ell=1} p_\ell k_\ell\right).
\]
Assuming $\|k_j\|_2 \le B_k$ for all $j$, we bound by using triangular inequality and $\forall \ell\in[N], p_\ell\geq0,\sum^N_{\ell=1} p_\ell=1$:
\[
\left\|\sum^N_{\ell=1} p_\ell k_\ell\right\|_2
\le
\sum^N_{\ell=1} p_\ell \|k_\ell\|_2
\leq
\sum^N_{\ell=1} p_\ell B_k
= B_k,
\]
which implies (By triangular inequality)
\[
\left\|k_i - \sum^N_{\ell=1} p_\ell k_\ell\right\|_2
\le \|k_i\|_2 + \left\|\sum^N_{\ell=1} p_\ell k_\ell\right\|_2
\le 2B_k.
\]
Therefore,
\[
\left\|\frac{\partial p_i}{\partial q}\right\|_2
=
\left\|\frac{p_i}{\sqrt d}
\left(k_i - \sum^N_{\ell=1} p_\ell k_\ell\right)\right\|_2
=
\frac{|p_i|}{\sqrt d}\left\|\left(k_i - \sum^N_{\ell=1} p_\ell k_\ell\right)\right\|_2
\le
\frac{2p_i B_k}{\sqrt d}.
\]

Similarly, using $\|q\|_2 \le B_q$, we obtain
\[
\left\|\frac{\partial p_i}{\partial k_j}\right\|_2
=
\frac{1}{\sqrt d} p_i |\delta_{ij}-p_j| \|q\|_2
\le
\frac{p_i B_q}{\sqrt d}.
\]

Finally, since $\sigma(-m_j)\le 1$ and $|\delta_{ij}-p_j|\le 1$, we have
\[
\left|\frac{\partial p_i}{\partial m_j}\right|
\le p_i,
\]
which completes the proof.
\end{proof}
\section{Design of Mask Pattern}
\label{app:init_mask}
In this section, we outline the key principles for designing effective mask-based initialization. Before presenting the design principles, we first note that low-order interactions, such as pairwise terms $x_i x_j$, can be easily represented by shallow neural networks. 
In particular, it is well known that a two-layer MLP can express functions such as XOR, indicating that modeling such interactions does not require deep architectural complexity.

This observation suggests that the role of attention can choose not to represent these polynomial interactions directly, but rather to select which variables should interact. 
However, prior work \citep{abbe2024generalization} failure indicates that Transformers may rely on spurious correlations that fit the training data, rather than recovering the underlying structure.

Therefore, by explicitly controlling token-wise interactions through the attention mask, we can guide the model to construct the desired extrapolation property in principle.

\paragraph{Design Principles for Mask Initialization.}
Mask-based initialization is guided by the following principles.

\textbf{(1) Task-aligned interaction.}
The mask enforces attention to capture task-relevant interactions by restricting attention to meaningful variable relationships. 
In particular, the mask structure should align with the functional decomposition of the target task.

\textbf{(2) Suppression of shortcut solutions.}
The mask suppresses spurious or low-order shortcut solutions, such as MDI-style interactions, 
which can fit the training data but fail to generalize.

\textbf{(3) Multi-head specialization.}
Each attention head can be viewed as a separate operator that captures different types of interactions. 
By allowing different heads to focus on distinct patterns and combining them through concatenation, 
the model can represent more complex structures.

\textbf{(4) Layer-wise composition.}
While a single layer defines local interaction patterns, deeper layers enable the composition of these interactions into more complex functions. 
By designing masks across layers, the model can progressively construct higher-order dependencies from simpler ones.

These principles define a structured way to control the interaction graph of the model, guiding it toward the desired computation.

\paragraph{Task-specific Mask Design.}
We illustrate how the proposed design principles translate into concrete mask constructions across different tasks.

For Boolean cases:

\textbf{2Parity.}
For 2Parity, the target depends only on the pairwise interaction $x_0x_1$. 
Following \textit{Principle (1)} and \textit{Principle (2)}, 
we allow attention only between $x_0$ and $x_1$, while suppressing all other interactions. 
This explicitly selects the correct computation path and prevents spurious correlations.

\textbf{3-Bit Majority.}
For 3-Bit Majority, the target is determined by aggregation rather than pairwise products. 
Following \textit{Principle (2)}, 
we suppress off-diagonal interactions to avoid MDI-style shortcuts, 
encouraging the model to capture the intended global structure.

\textbf{Cyclic3.}
Cyclic3 involves higher-order cyclic interactions of the form $x_i x_{i+1} x_{i+2}$. 
Following \textit{Principle (4)}, 
we construct these interactions through layer-wise composition of structured local connections, 
rather than encoding them directly.

\textbf{ThreeSym.}
ThreeSym can be decomposed into a sum of three pairwise parity terms. 
Following \textit{Principle (2)} and \textit{Principle (3)}, 
we assign each attention head a 2Parity-style mask corresponding to one pairwise interaction. 
The outputs are then combined via head concatenation, enabling the model to represent the full function.


For Arithmetic problems :

\textbf{Addition.}
Addition decomposes into digit-wise operations. 
Following \textit{Principle (1)}, \textit{Principle (3)}, and \textit{Principle (4)}, 
we allow attention only between corresponding digit positions across the two numbers, 
while suppressing unrelated interactions. 
In addition, different heads can capture complementary aspects of the computation, such as digit-wise addition and carry interactions. 
This mirrors the underlying algorithm and supports carry propagation through subsequent layers.

\textbf{Multiplication.}
Multiplication requires pairwise interactions between digits followed by aggregation. 
Following \textit{Principle (1)}, \textit{Principle (3)}, and \textit{Principle (4)}, 
we allow structured cross-number interactions while leveraging multiple heads to capture different partial products, 
which are then combined to realize the full computation. 
Furthermore, deeper layers enable the composition of these interactions, implicitly modeling carry propagation and addition.

\begin{figure}[!ht]
\centering
\includegraphics[width=0.85\linewidth]{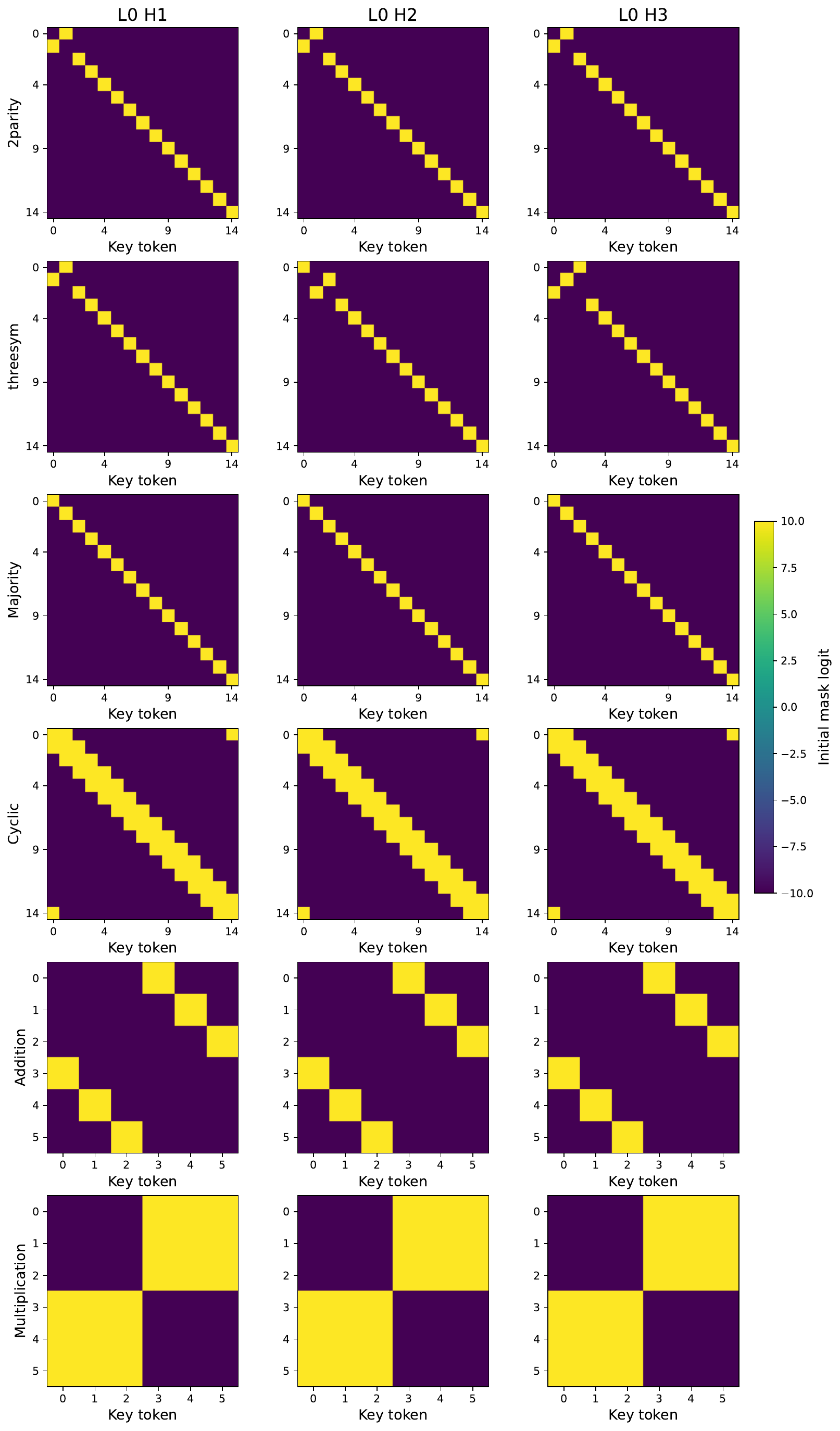}
\caption{
Initial mask logits for different tasks. Each row corresponds to a task and each column shows different attention heads at the first layer.
}
\label{fig:initial_mask}
\end{figure}
\clearpage

\section{Wiki-text and Tiny-stories}
\label{app:wiki}
Another major application of Transformers is natural language processing (NLP). Since self-attention already provides a strong inductive bias for modeling token dependencies\citep{guo2025deepseek,vaswani2017attention}, there is no clear predefined structural pattern for initialization. Therefore, for these tasks, we initialize a full-rank learnable attention mask using causal relative distance. 
Specifically, for each head we set $M_{ij} = -\max(i - j, 0)$, encouraging stronger attention to nearby past tokens. 
The mask is incorporated as an additive bias via $\log \sigma(sM)$ (with $s=10$), and remains learnable during training to adapt this inductive bias.

As shown in \Cref{fig:nlp_results} and \Cref{tab:nlp_final}, mask initialization yields consistent, albeit modest, improvements, achieving lower perplexity and higher accuracy on both TinyStories \citep{eldan2023tinystories} and WikiText-2 \citep{merity2016pointer}. It also converges faster in early training while maintaining comparable final performance, indicating improved optimization without degrading generalization.
\begin{figure}[!ht]
\centering
\includegraphics[width=0.9\linewidth]{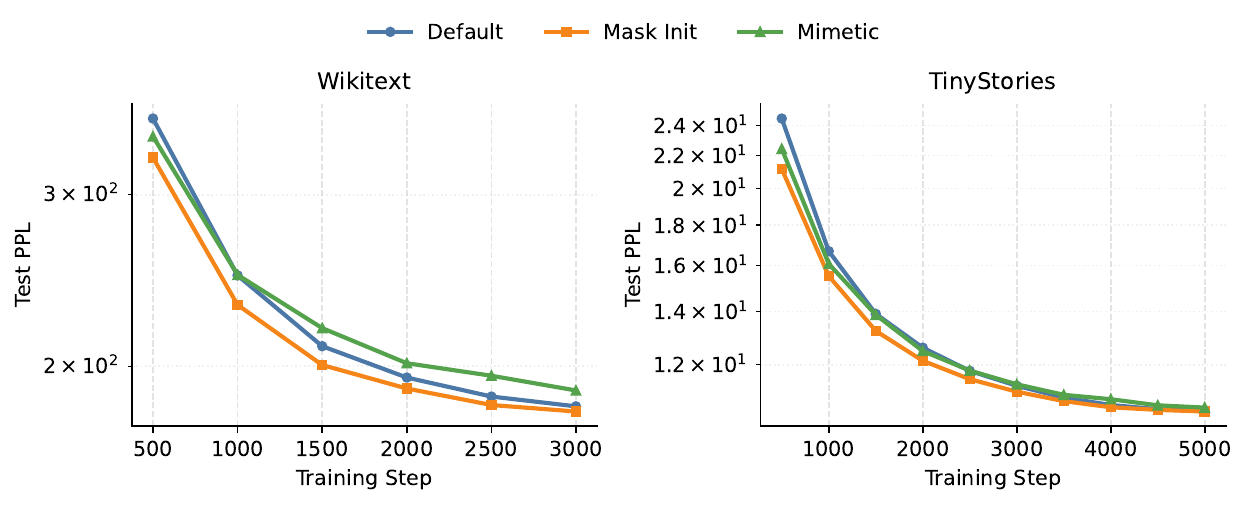}

\caption{
Training curves on TinyStories (top) and WikiText-2 (bottom).
}
\label{fig:nlp_results}
\end{figure}
\begin{table}[!ht]
\centering
\caption{Perplexity (PPL) comparison. Lower is better.}
\label{tab:nlp_final}
\begin{tabular}{lccc}
\toprule
\textbf{Dataset} & \textbf{Setting} & \textbf{Val PPL} & \textbf{Test PPL} \\
\midrule
\multirow{3}{*}{Wikitext (3000 steps)}
& Default        & 172.78 & 181.91 \\
& Mask Init & \best{166.76} & \best{179.70} \\
& Mimetic    & 178.34 & 188.87 \\
\midrule
\multirow{3}{*}{TinyStories (5000 steps)}
& Default        & 10.42 & 10.49 \\
& Mask Init & \best{10.40} & \best{10.48} \\
& Mimetic  & 10.64 & 10.60 \\
\bottomrule
\end{tabular}
\end{table}

From the visualization of the mask before and after training (see \Cref{fig:nlp_mask}), the lower-triangular structure becomes noticeably flatter. 
 This indicates that the model expands its effective receptive field beyond strictly local attention. 
Crucially, this demonstrates that mask-based priors are both persistent and adaptable. 
This also suggests that, compared to other tasks, NLP tasks rely less on explicit token-level interaction patterns and more on latent feature interactions introduced by query--key matrices.
\begin{figure}[t]
    \centering
    \includegraphics[width=0.95\linewidth]{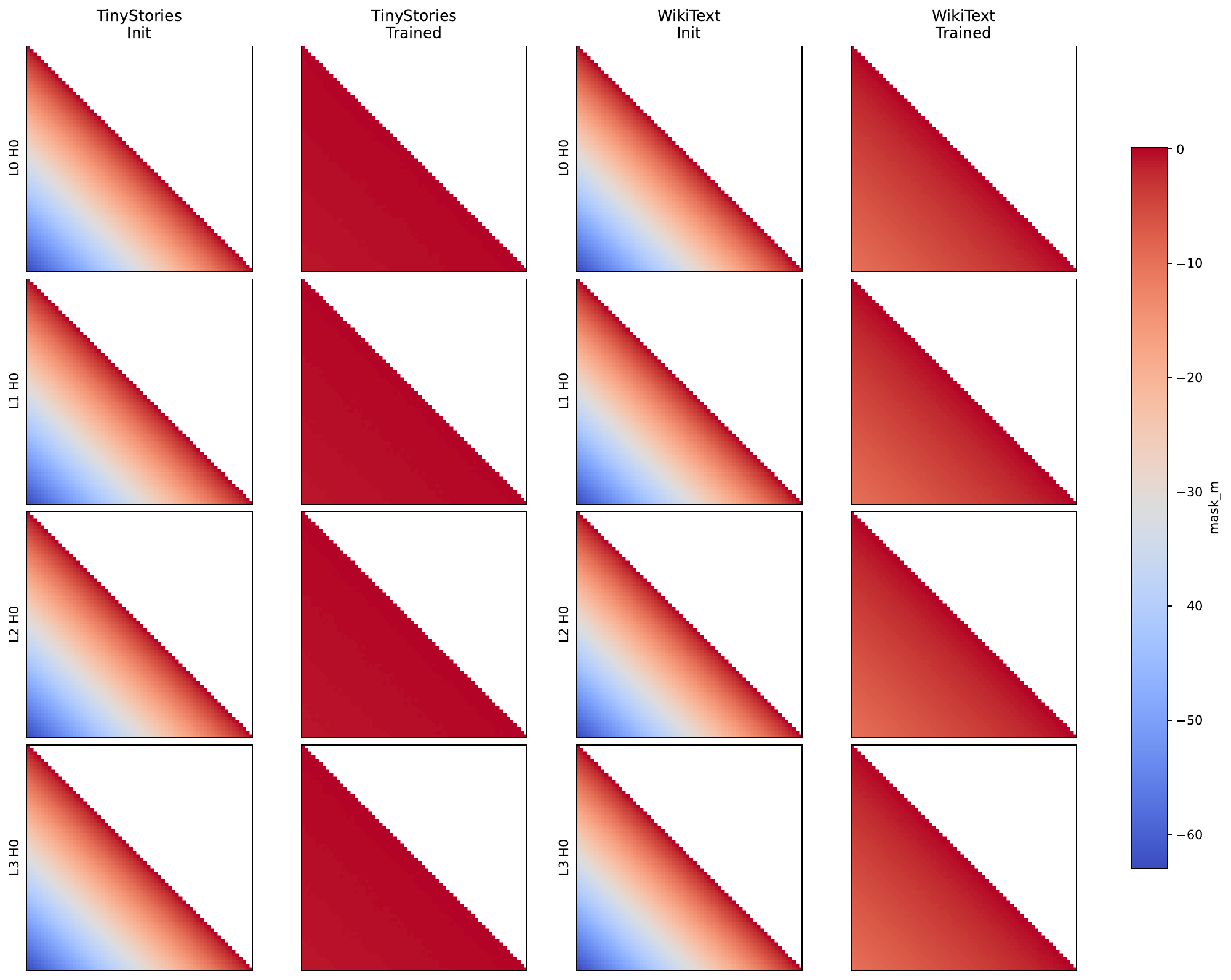}
    \caption{
    Relative mask bias before and after training on TinyStories and WikiText.
    The lower-triangular structure is smoothed across layers, indicating that there is no preference for particular interactions in NLP tasks.
    }
    \label{fig:nlp_mask}
\end{figure}
\newpage
\section{Experiment Details}
\label{app:exp_de}
In this section, we provide a detailed description of the experimental setup. All experiments are conducted on a single NVIDIA A100 GPU (40GB)..

\subsection{Boolean Problems}
For Boolean tasks, we build our implementation upon the GOTU repository\footnote{\url{https://github.com/aryol/GOTU/tree/main}}, with additional modifications such as incorporating a learning rate scheduler to improve convergence speed. The Structured (Opt) initialization is based on the official implementation\footnote{\url{https://github.com/osiriszjq/structured_initialization}}, which optimizes the $QK$ matrices directly via gradient descent, instead of using SVD-based decomposition. The detailed information is shown at \Cref{tab:boolean_setup}. For mask initialization, we choose open positions = 10 and close positions = -10.

\begin{table}[!h]
\centering
\caption{Training setup for Boolean tasks.}
\label{tab:boolean_setup}
\begin{tabular}{lccccccc}
\toprule
Task & Epochs & Layers & Heads & LR & Batch & Optimizer & Seed \\
\midrule
2Parity      & 90  & 4 & 3 & $1\times10^{-3}$ & 256 & Adam & 0 \\
ThreeSym     & 90  & 4 & 3 & $1\times10^{-3}$ & 256 & Adam& 0 \\
Majority     & 90   & 4 & 3 & $1\times10^{-3}$ & 256 & Adam& 0 \\
Cyclic       & 90   & 4 & 3 & $1\times10^{-3}$ & 256 & Adam& 0 \\
\bottomrule
\end{tabular}
\end{table}

\subsection{Arithmetic Problems}
We implement all arithmetic experiments from scratch due to the lack of existing code. All methods share the same architecture and training setup, differing only in the initialization strategy \Cref{tab:arith_setup}.

\begin{table}[!h]
\centering
\caption{Arithmetic task setup.}
\label{tab:arith_setup}
\begin{tabular}{ll}
\toprule
Item & Value \\
\midrule
Task & Addition / Multiplication (3-digit) \\
Dataset size & $10^6$ (all combinations) \\
Model & Transformer \\
Layers / Heads & 6 / 4 \\
Model dim & 256 \\
MLP dim & 256 \\
Head dim & 64 \\
Positional encoding & learnable \\
Loss & Cross-entropy \\
Optimizer & AdamW \\
Learning rate & $1\times10^{-3}$ \\
Weight decay & $1\times10^{-3}$ \\
Batch size & 256 \\
Epochs & 50 \\
Scheduler & cosine \\
\midrule
Mask init & open = 5, close = -5 \\
Structured init (SVD)& open = 10000, close = 0 \\
Structured init (Opt)& steps = 2000, lr = $1\times10^{-4}$\\
\bottomrule
\end{tabular}
\end{table}

\subsection{Image Classification}
We summarize the experimental setup for image classification tasks in \Cref{tab:init_1,tab:init_2,tab:init_3} and the corresponding dataset-specific tables. All experiments are conducted using a ViT-Tiny architecture with identical training pipelines across different initialization methods to ensure fair comparison.

For CIFAR10 and CIFAR100, we follow a standard training protocol with a batch size of 512, cosine learning rate schedule, and extensive data augmentation including Mixup, CutMix, and RandAugment. For ImageNet100, we adopt a similar setup with linear learning rate scaling and a shorter warmup period.

To evaluate the effect of initialization, we compare default initialization with prior structured initialization methods (mimetic and impulse) and our proposed mask initialization. For mask initialization, we use a simple prior with open entries set to $10$ and closed entries set to $-10$, which encodes a strong structural bias at initialization.

We evaluate our method on CIFAR10 \citep{Krizhevsky2009LearningML}, CIFAR100 \citep{Krizhevsky2009LearningML}, and ImageNet100 is constructed as a 100-class subset of ImageNet \citep{5206848}, following the version available at \url{https://huggingface.co/datasets/clane9/imagenet-100}. Our implementation is based on the official repository\footnote{\url{https://github.com/osiriszjq/structured_initialization}}.

\begin{table}[!h]
\centering
\caption{Experimental setup for CIFAR10.}
\label{tab:init_1}
\begin{tabular}{ll}
\toprule
Item & Value \\
\midrule
Model & vit\_tiny\_patch16\_224 \\
Dataset & CIFAR10 \\
Classes & 10 \\
Input size & $3\times224\times224$ \\
Seed & 0 \\
GPU & 1×A100 \\
Batch size & 512 \\
Epochs & 400 \\
Optimizer & AdamW \\
LR & $2\times10^{-3}$ \\
Scheduler & cosine \\
Warmup epochs & 50 \\
Weight decay & 0.05 \\
AMP & yes \\
Drop path & 0.1 \\
Label smoothing & 0.1 \\
Mixup / CutMix & 0.8 / 1.0 \\
Augment & RandAug (m9) \\
\bottomrule
\end{tabular}
\end{table}

\begin{table}[!h]
\centering
\label{tab:init_2}
\caption{Experimental setup for CIFAR100.}
\begin{tabular}{ll}
\toprule
Item & Value \\
\midrule
Model & vit\_tiny\_patch16\_224 \\
Dataset & CIFAR100 \\
Classes & 100 \\
Input size & $3\times224\times224$ \\
Seed & 0 \\
GPU & 1×A100 \\
Batch size & 512 \\
Epochs & 400 \\
Optimizer & AdamW \\
LR & $2\times10^{-3}$ \\
Scheduler & cosine \\
Warmup epochs & 50 \\
Weight decay & 0.05 \\
AMP & yes \\
Drop path & 0.1 \\
Label smoothing & 0.1 \\
Mixup / CutMix & 0.8 / 1.0 \\
Augment & RandAug (m9) \\
\bottomrule
\end{tabular}
\end{table}

\begin{table}[!h]
\centering
\label{tab:init_3}
\caption{Experimental setup for ImageNet100.}
\begin{tabular}{ll}
\toprule
Item & Value \\
\midrule
Model & vit\_tiny\_patch16\_224 \\
Dataset & ImageNet100 \\
Classes & 100 \\
Input size & $3\times224\times224$ \\
Seed & 42 \\
GPU & 1×A100 \\
Batch size & 512 \\
Epochs & 300 \\
Optimizer & AdamW \\
Base LR & $1\times10^{-3}$ \\
LR scaling & linear (base size 512) \\
Scheduler & cosine \\
Warmup epochs & 5 \\
Cooldown epochs & 10 \\
Weight decay & 0.05 \\
AMP & yes \\
Global pool & avg \\
Drop path & 0.1 \\
Label smoothing & 0.1 \\
Mixup / CutMix & 0.8 / 1.0 \\
Augment & RandAug (m9) \\
\bottomrule
\end{tabular}
\end{table}
\newpage
\clearpage
\subsection{Wikitext and TinyStories}
For NLP tasks, we build upon the TinyStories-GPT repository\footnote{\url{https://github.com/PraveenRaja42/Tiny-Stories-GPT}} and evaluate on TinyStories and WikiText-2. All methods share the same model architecture and training setup, differing only in the initialization strategy.
\begin{table}[!h]
\centering
\caption{NLP task setup.}
\label{tab:nlp_setup}
\begin{tabular}{lcc}
\toprule
Item & TinyStories & WikiText-2 \\
\midrule
Dataset & TinyStories & WikiText-2 (raw) \\
Tokenizer & GPT-2 & GPT-2 \\
Vocab size & 50257 & 50257 \\
Train data & 12k samples & full train split \\
Validation & 2k samples & validation split \\
Test & validation fallback & test split \\
\midrule
Model & GPT-style Transformer & GPT-style Transformer \\
Layers / Heads & 12 / 12 & 12 / 12 \\
Embedding dim & 512 & 512 \\
\midrule

Batch size & 128 & 128 \\
Block size & 64 & 64 \\
Max iters & 5000 & 3000 \\
Learning rate & $3\times10^{-4}$ & $3\times10^{-4}$ \\
Optimizer & AdamW & AdamW \\
Scheduler & cosine & cosine \\
Dropout & 0.2 & 0.2 \\
Seed & 1337 & 1337 \\

\bottomrule
\end{tabular}
\end{table}

\clearpage

\section{Visualizations}
\subsection{Visualization of Structure Initialization}
\label{app:impulse}
We visualize structured initialization in \Cref{fig:structured_init}, where convolutional impulse filters are translated into corresponding attention masks.
\begin{figure}[!ht]
    \centering
    \includegraphics[width=0.9\linewidth]{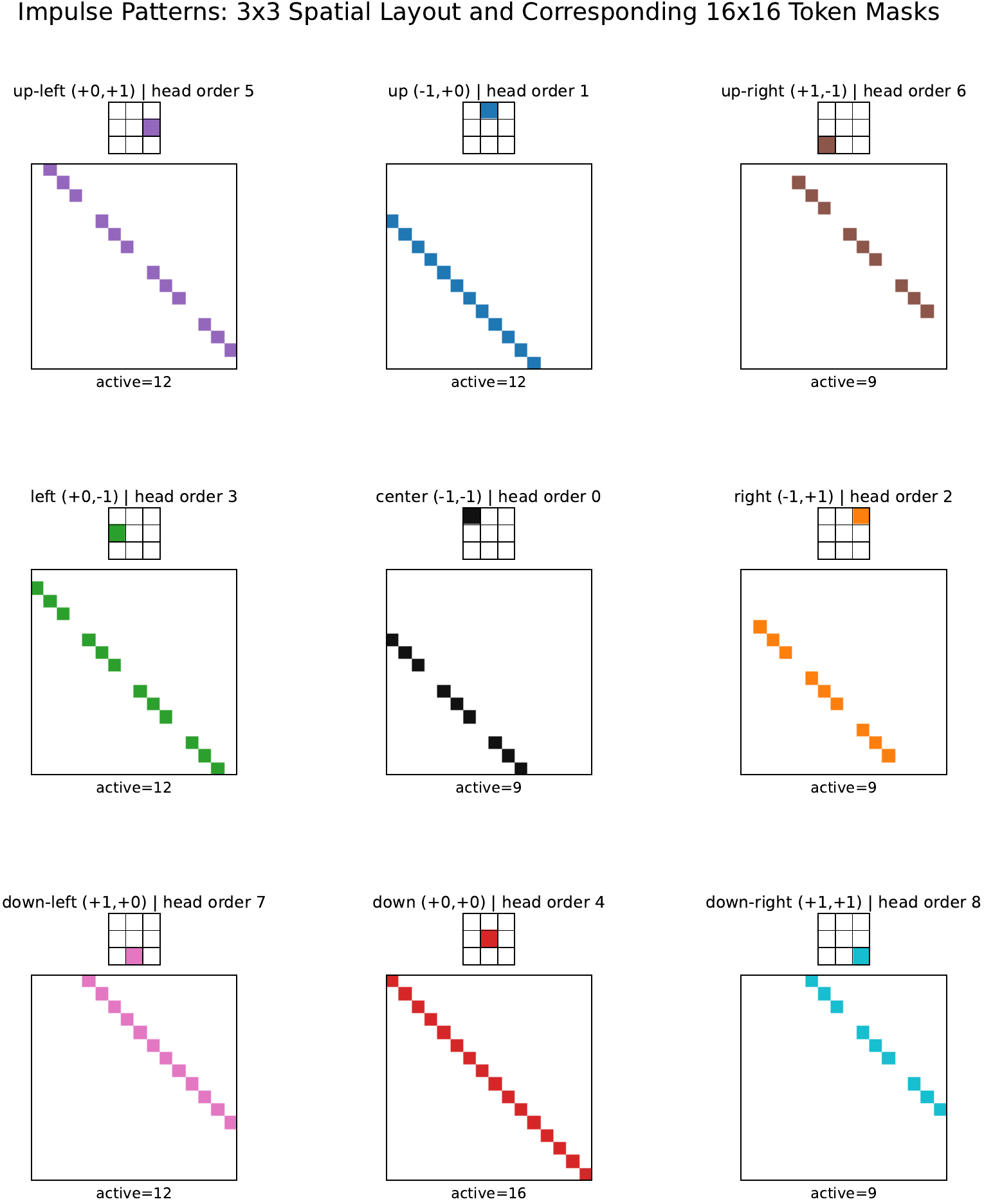}
    \caption{
    Structured initialization for vision Transformers. 
    Convolutional impulse filters (above) are translated into corresponding token-level attention masks (below), 
    inducing locality and spatial inductive bias.
    }
    \label{fig:structured_init}
\end{figure}
\subsection{Mask For Image Classification}
\label{app:trained_mask}
We visualize the learned masks for Head 1 at different layers after training for different tasks, as shown in \Cref{fig:mask_visualization}. 
The plots illustrate the effective bias applied to the attention logits. 
We observe that when the initial inductive bias is not well aligned with the task, the learnable mask adapts and modifies the pattern accordingly.
\begin{figure}[!ht]
\centering
\includegraphics[width=0.9\linewidth]{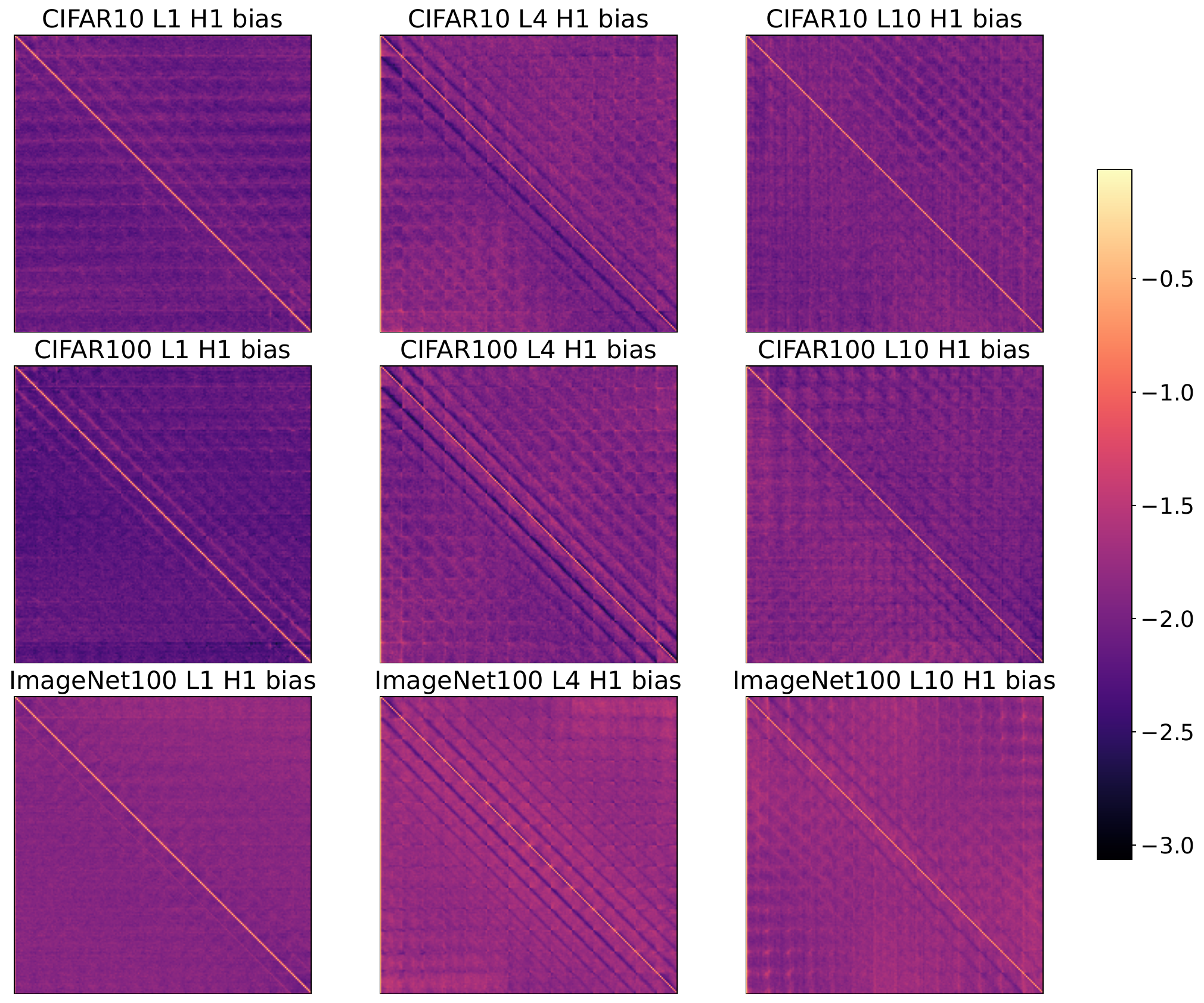}
\caption{
Learned mask (after training) for CIFAR10, CIFAR100, and ImageNet100 across different layers. The visualization shows the effective mask bias applied to the attention logits.
}
\label{fig:mask_visualization}
\end{figure}

\subsection{Attention Map For Image Classification}
\label{app:trained_img}
We visualize the attention maps for different methods in \Cref{fig:attn_cifar10,fig:attn_cifar100,fig:attn_imagenet100}. All attention values are taken after the softmax operation for the first test sample. For mask-based initialization, the mask is applied before softmax, and the resulting attention probabilities are shown. We observe that the core pattern is preserved across all tasks, while additional positions are gradually opened depending on the alignment between the task, layer, and the prior.

\begin{figure}[!ht]
\centering
\includegraphics[width=0.9\linewidth]{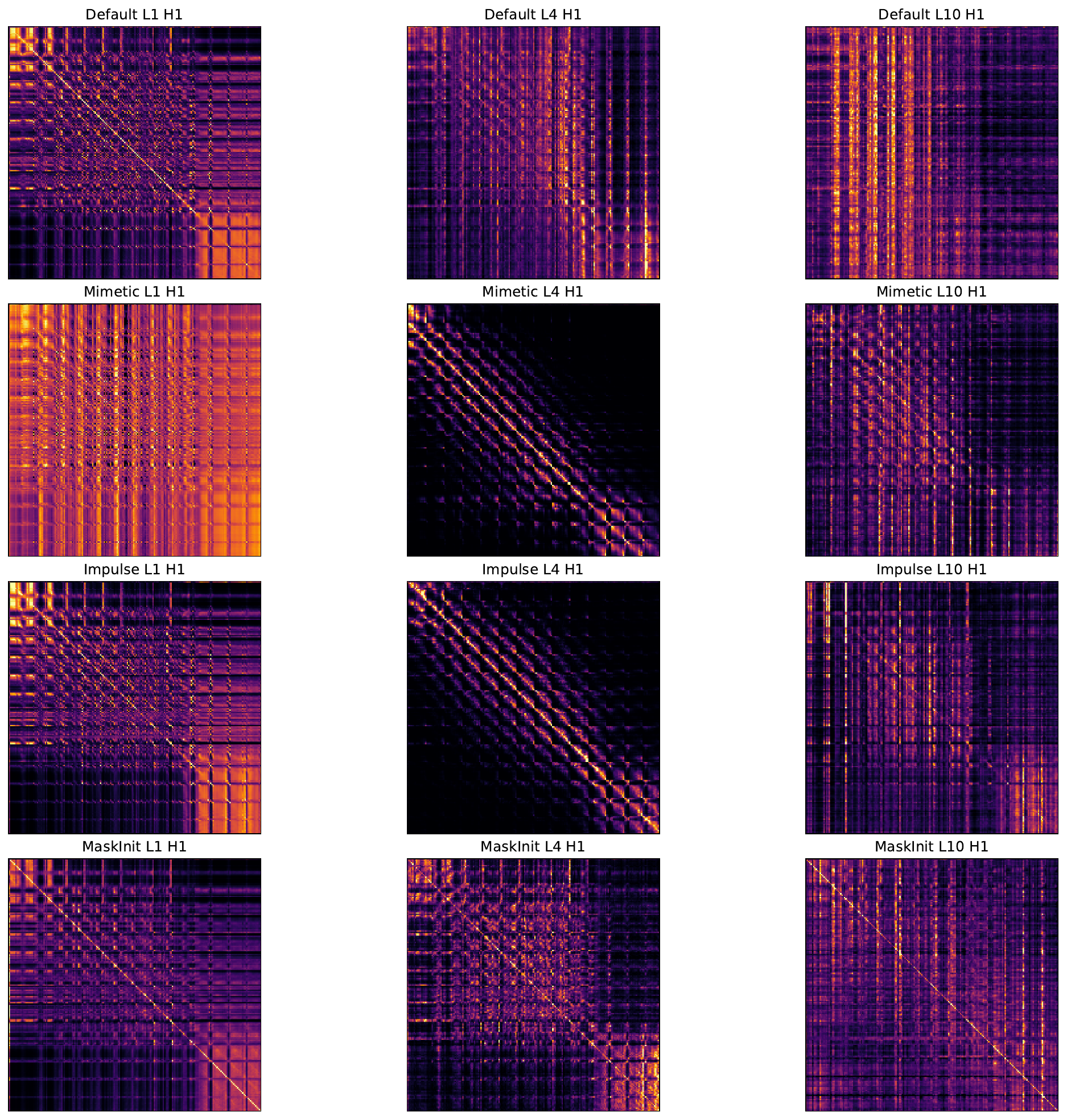}
\caption{
Attention maps for the first validation sample on CIFAR10.
}
\label{fig:attn_cifar10}
\end{figure}
\newpage
\begin{figure}[!ht]
\centering
\includegraphics[width=0.9\linewidth]{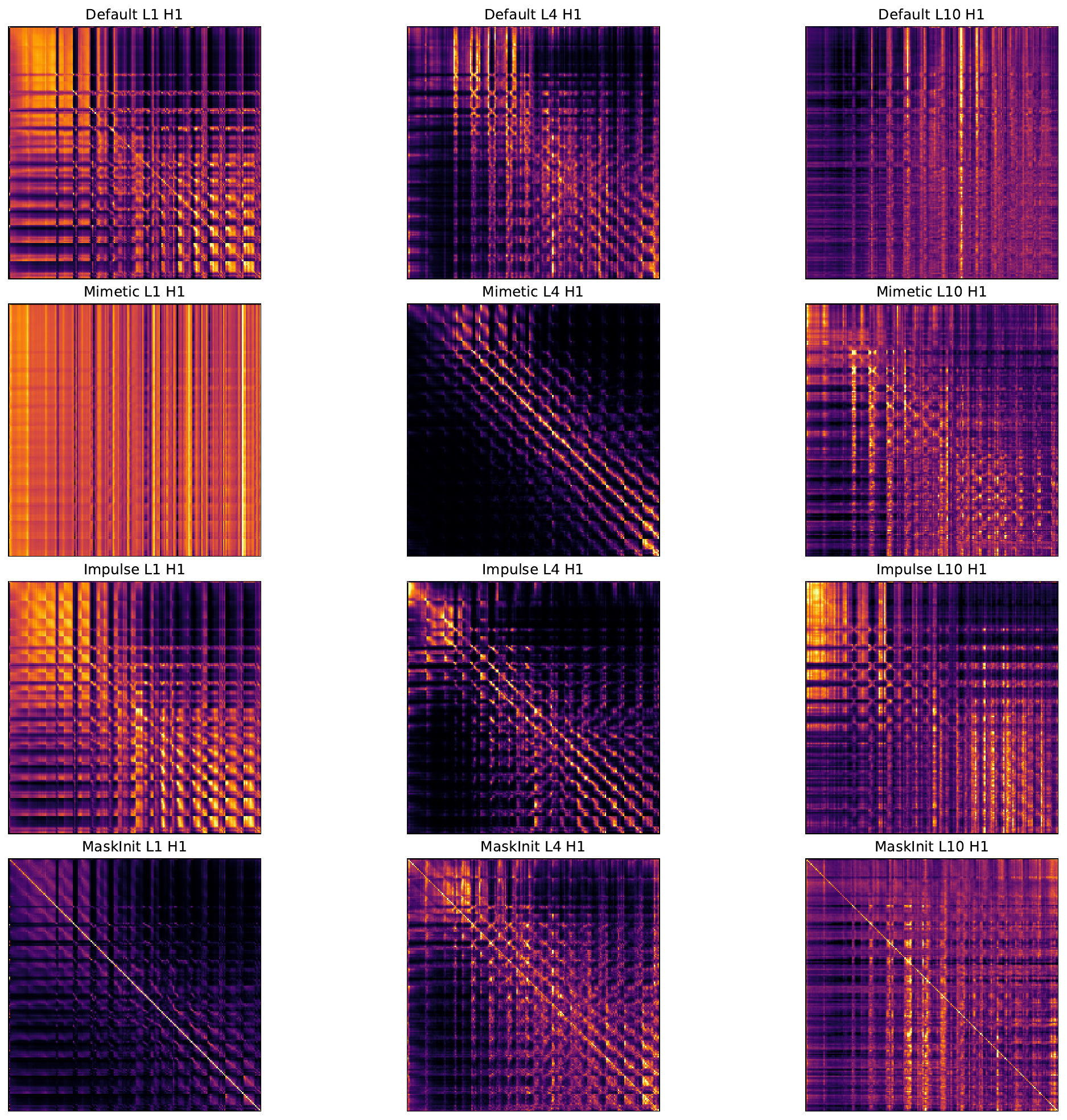}
\caption{
Attention maps for the first validation sample on CIFAR100.
}
\label{fig:attn_cifar100}
\end{figure}
\clearpage
\begin{figure}[!ht]
\centering
\includegraphics[width=0.9\linewidth]{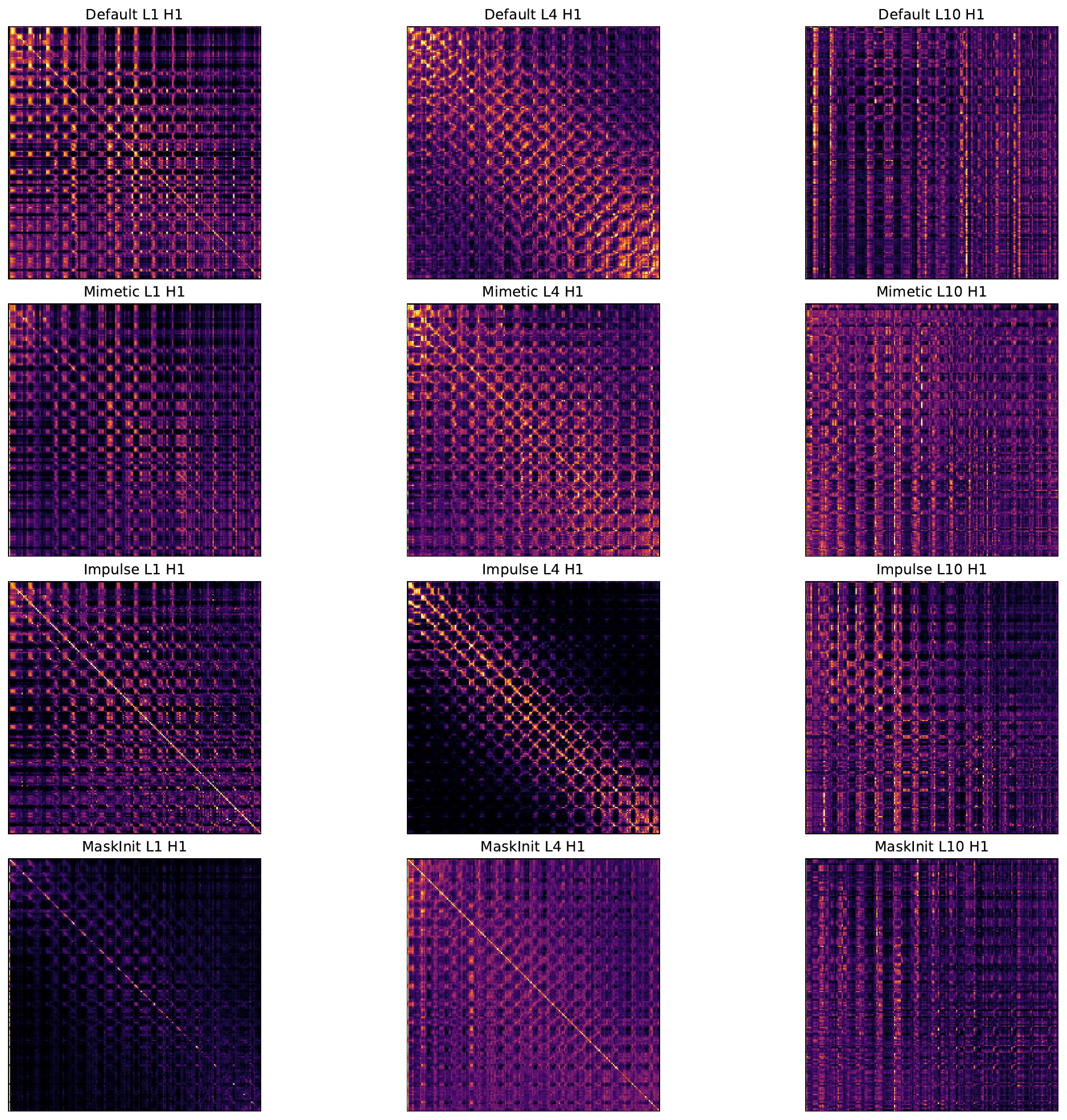}
\caption{
Attention maps for the first validation sample on ImageNet100.
}
\label{fig:attn_imagenet100}
\end{figure}
\clearpage

\clearpage

\subsection{Visualizations for Convergence Curves}
In this section, we visualize the convergence behavior of different initialization methods on the CIFAR100 and ImageNet100 datasets as shown at \Cref{fig:cv_results}. 
Our method begins to outperform the baselines after around 100 epochs and maintains a faster convergence rate throughout training. 
This suggests a two-phase training dynamic: in the early stage, the model aligns latent features under the constraint of the mask, while in later stages, it jointly refines both the mask and the latent representations.
\begin{figure}[!ht]
\centering
\includegraphics[width=0.9\linewidth]{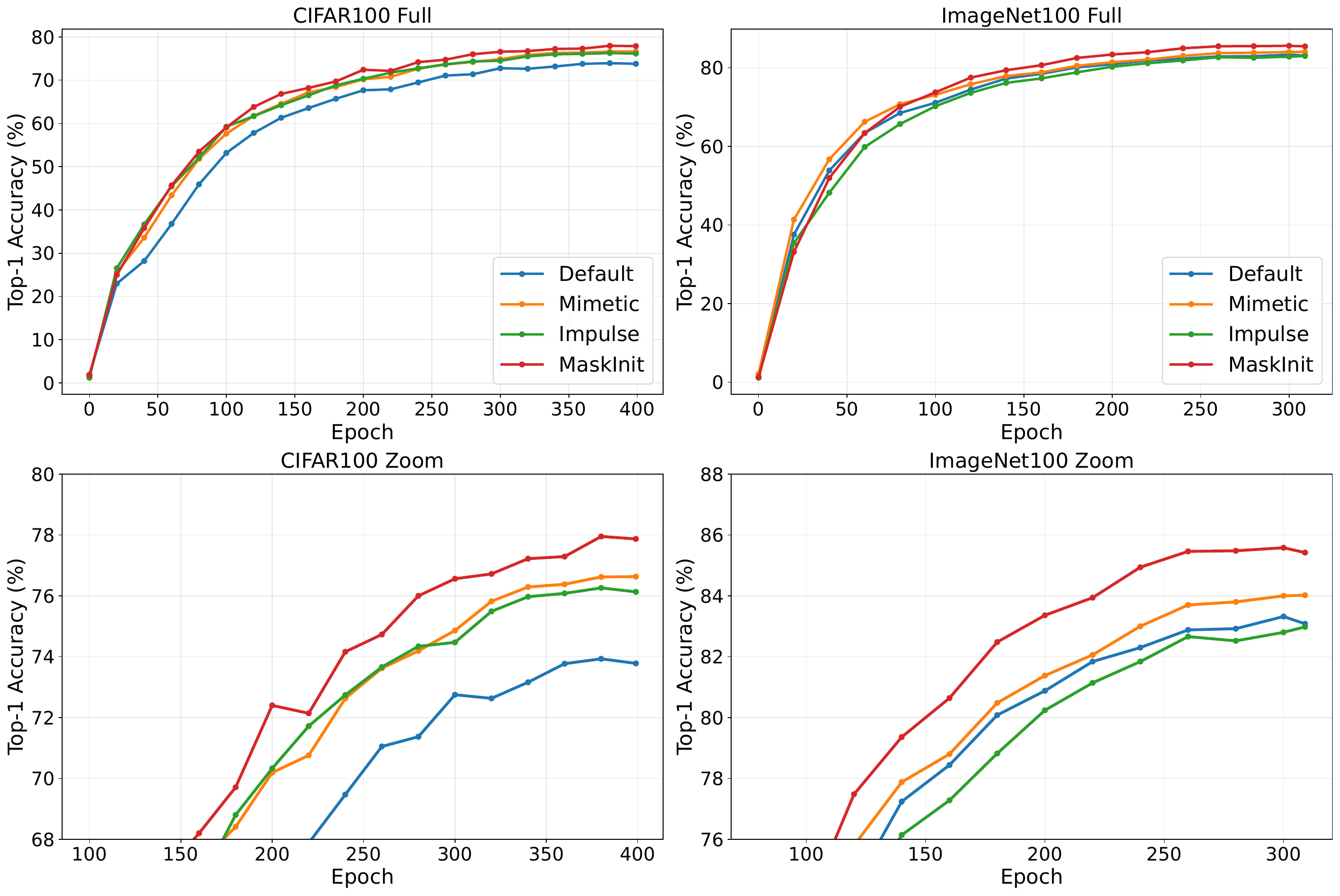}
\caption{
Training curves on CIFAR100 (left) and ImageNet100 (right) under identical training settings. The plots show validation Top-1 accuracy over training epochs. We compare default initialization, prior structured initialization methods, and the proposed Mask Init. Mask initialization consistently converges faster in the early stages and achieves higher final accuracy. This indicates that mask-based initialization provides a better inductive bias, leading to improved optimization efficiency and generalization on vision benchmarks.
}
\label{fig:cv_results}
\end{figure}
\clearpage

\newpage
\end{document}